\newtheorem{theorem}{Theorem}[section]
\numberwithin{equation}{section}
\newtheorem{definition}[theorem]{Definition}
\newtheorem{lemma}[theorem]{Lemma}
\newtheorem{proposition}[theorem]{Proposition}
\newtheorem{assumption}{Assumption}
\newenvironment{proof}[1][Proof]{\textbf{#1. }}{\ \rule{0.5em}{0.5em}}%
\journal{}
\begin{document}

\begin{frontmatter}



\title{Adaptive Parameter Selection for Kernel Ridge Regression\tnoteref{t1}} \tnotetext[t1]{The research was partially
	supported by the  National Key R\&D Program of China (No.2020YFA0713900) and the
    Natural Science Foundation of China [Grant No  62276209]. Email: sblin1983@gmail.com }


 \author{Shao-Bo Lin}
\address{Center for Intelligent Decision-Making and Machine Learning, School of Management, Xi'an Jiaotong University, Xi'an 710049, China
   }

\begin{abstract}
 This paper focuses on   parameter selection issues of kernel ridge regression (KRR). Due to special spectral properties of KRR, we find that delicate subdivision of the parameter interval shrinks the difference between two successive KRR estimates.
Based on this observation, we develop  
 an early-stopping type parameter selection strategy for KRR according to  the so-called Lepskii-type principle. Theoretical verifications are presented in the framework of learning theory to show that KRR equipped with the proposed parameter selection strategy succeeds in achieving optimal learning rates and adapts to different norms, providing a new record of parameter selection for kernel methods.
\end{abstract}

\begin{keyword}
Learning theory, kernel ridge regression, parameter selection, Lepskii principle

\end{keyword}
\end{frontmatter}

\section{Introduction}

Due to  perfect theoretical behaviors in theory \cite{caponnetto2007optimal}, kernel ridge regression (KRR) has been widely used for the regression purpose. Numerous provable  variants such as  Nystr\"{o}m regularization \cite{rudi2015less}, distributed KRR \cite{zhang2015divide}, localized KRR \cite{meister2016optimal} and boosted KRR \cite{lin2019boosted} have been developed to reduce the computational burden and circumvent the saturation \cite{gerfo2008spectral} of KRR. However, theoretical verifications on KRR, as well as its variants,  are built upon the a-priori regularization parameter selection strategy, which is practically infeasible since the a-priori information of the data is generally inaccessible. 

Though the uniqueness of the optimal regularization has been proved in \cite{cucker2002best} and the totally stability studied in \cite{christmann2018total,kohler2022total} illustrated that KRR  performs   stable with respect to the regularization parameter,  posterior choices of regularization parameter to realize the excellent theoretical behaviors of KRR  still  remains open. Three existing approaches for parameter selection of KRR are the hold-out (HO) \cite{caponnetto2007optimal}, discrepancy-type principle (DP) \cite{celisse2021analyzing} and Lepskii-type  principle  (LP) \cite{lu2020balancing,blanchard2019lepskii}.
Numerically,   HO requires a split of the sample set $D$ into training and validation sets; derives a set of KRR estimators via the training set and selects the optimal regularization parameter on the validation set. 
Theoretical optimality of HO was provided in \cite[Chap.7]{gyorfi2002distribution} for expectation and \cite{caponnetto2010cross} for probability. 
However, there are mainly three design flaws of HO. At first, the  validation set is
not involved in the training process, resulting in waste of samples and sub-optimality of
HO in practice. Then, HO generally requires that  the empirical excess risk is an accessible unbiased estimate of the population risk, which prohibits the application of it in deriving parameters for  KRR under the reproducing kernel Hilbert space (RHKS) norm. Finally, as shown in \cite{caponnetto2010cross}, HO is implemented under the assumption that the output is bounded, imposing strong boundedness assumption of the noise. 

Different from HO that is available for almost all least-square regression algorithms, DP  and LP are somewhat exclusive to kernel methods. 
DP, originated from linear inverse problems \cite{engl1996regularization}, devotes to quantifying the fitting error by some computable quantities such as the noise of data \cite{celisse2021analyzing} or complexity of   derived  estimates \cite{lin2019boosted}. Though it is proved to be powerful in the literature of inverse problems \cite{engl1996regularization},  its performance is not, at least in theory, optimal for learning purpose since the derived learning rates   in \cite{celisse2021analyzing} is sub-optimal. LP (also called as the  balancing principle), originally proposed by \cite{lepskii1991problem}, focuses on selecting parameter by bounding  differences of two successive estimates. It was firstly adopted in \cite{de2010adaptive} for the learning purpose to determine the regularization parameter of KRR and then improved in \cite{lu2020balancing} to encode the capacity information of RKHS and \cite{blanchard2019lepskii} to adapt to different norms.  Since LP  does not require the split of  data, it practically performs better than HO \cite{lu2020balancing}.  However, there are also two crucial problems concerning  LP. On one hand, LP  needs recurrently  pairwise comparisons of different KRR estimates, which inevitablely brings additional computational burden. On the other hand,  theoretical results presented in \cite{lu2020balancing}  and \cite{blanchard2019lepskii}   are only near optimal in the sense that there is  at least an additional  logarithmic factor in the learning rates of corresponding KRR.

This paper aims to design an early-stopping type parameter selection strategy based on LP to equip KRR to realize its excellent learning performance in theory. Due to the special spectral property of KRR, we present a close  relation between differences of two successive KRR estimates and the empirical effective dimension and find that   sub-division of the parameter interval plays an important role in quantifying this relation. In particular, our theoretical analysis shows that   the uniform  sub-division of the parameter interval benefits in reflecting the spectral property of KRR, which is beyond the capability of the  coarse sub-division in the logarithmic scale adopted in \cite{lu2020balancing,blanchard2019lepskii}. 
Motivated by this, we propose an implementable and provable early-stopping scheme  with uniform partition of the parameter interval, called as adaptive selection with uniform subdivision (ASUS), to equip KRR. There are two main advantages of ASUS. The first one is that ASUS is actually an early-stopping type parameter selection strategy that succeeds in removing the recurrently pairwise comparisons of LP in the literature \cite{de2010adaptive,lu2020balancing,lepskii1991problem}. The other is that KRR with ASUS is proved to achieve 
  optimal learning rates of KRR, which are better than the rates established for discripancy principle \cite{celisse2021analyzing}, balancing principle \cite{lu2020balancing} and Lepskii principle \cite{blanchard2019lepskii}.

\section{Kernel Ridge Regression and Parameter Selection}

Let   $({\mathcal H}_K, \|\cdot\|_K)$ be the   RKHS induced by a
Mercer kernel $K$ on a  compact  metric space ${\mathcal X}$.  Let $D:=\{(x_{i},y_{i})\}_{i=1}^{|D|}\subset\mathcal X\times\mathcal Y$ with  
$\mathcal Y\subseteq\mathbb R$     be the set of data.
Kernel ridge regression (KRR) \cite{smale2005shannon}  is mathematically defined by
\begin{equation}\label{KRR-global}
    f_{D,\lambda} =\arg\min_{f\in \mathcal{H}_{K}}
    \left\{\frac{1}{|D|}\sum_{(x, y)\in D}(f(x)-y)^2+\lambda\|f\|^2_{K}\right\},
\end{equation}
where $|D|$ denotes the number cardinality of the set $D$.
Since KRR needs to compute the inversion of the $|D|\times|D|$ matrix $\mathbb K+\lambda |D| I$ with $\mathbb K=(K(x_i,x_j))_{i,j=1}^{|D|}$ the kernel matrix,  for a fixed regularization parameter $\lambda$, the storage and training complexities of KRR are $\mathcal O(|D|^2)$ and $\mathcal O(|D|^3)$, respectively.

Theoretical assessments of KRR have been made in large literature \cite{smale2007learning,caponnetto2007optimal,steinwart2009optimal,zhang2015divide,lin2017distributed}, showing that KRR is an excellent  learner in the framework of learning theory \cite{cucker2007learning,steinwart2008support}, in which the samples  are assumed to be drawn identically and independently (i.i.d.) according to an unknown but definite distribution $\rho=\rho(y|x)\times\rho_X$ and the aim   is to build a  tight bound of $\|f_{D,\lambda}-f_\rho\|_\rho$ with  $f_\rho=\int_{\mathcal Y}yd\rho(y|x)$   the  well known regression function  and $\|\cdot\|_\rho$   the norm of the $\rho_X$-square integrable functions spaces $L_{\rho_X}^2$. In some settings such as inverse regression \cite{blanchard2018optimal} and mismatch learning \cite{chang2017distributed}, it also requires  to derive tight bound of $\|f_{D,\lambda}-f_\rho\|_K$  for $f_\rho\in\mathcal H_K$.

To derive tight bounds for  $\|f_{D,\lambda}-f_\rho\|_\rho$  and $\|f_{D,\lambda}-f_\rho\|_K$, the following three  assumptions are standard in   learning theory \cite{cucker2007learning,steinwart2009optimal, blanchard2016convergence,blanchard2019lepskii,lu2019analysis,lu2020balancing}.

\begin{assumption}\label{Assumption:boundedness}
   Assume $\int_{\mathcal Y}
y^2d\rho<\infty$ and
\begin{equation}\label{Boundedness for output}
\int_{\mathcal Y}\left(e^{\frac{|y-f_\rho(x)|}M}-\frac{|y-f_\rho(x)|}M-1\right)d\rho(y|x)\leq
\frac{\gamma^2}{2M^2}, \qquad \forall x\in\mathcal X,
\end{equation}
where $M$ and $\gamma$ are positive constants.
\end{assumption}

Assumption \ref{Assumption:boundedness} is the well known Bernstein noise assumption \cite{caponnetto2007optimal},  which is satisfied for bounded, Gaussian and sub-Gaussian noise. To introduce the second assumption, we introduce the well known integral operator $L_K:L_{\rho_X}^2\rightarrow L_{\rho_X}^2$ (or $\mathcal H_K\rightarrow\mathcal H_K$ if no confusion is made) defined by
$$
           L_Kf=\int_{\mathcal X}f(x)K_xd\rho_X,
$$
where $K_x=K(x,\cdot)$. 

\begin{assumption}\label{Assumption:regularity}
For $r>0$, assume
\begin{equation}\label{regularitycondition}
         f_\rho=L_K^r h_\rho,~~{\rm for~some}  ~ h_\rho\in L_{\rho_X}^2,
\end{equation}
where $L_K^r$ denotes the $r$-th power of $L_K: L_{\rho_X}^2 \to
L_{\rho_X}^2$ as a compact and positive operator.
\end{assumption}

It is easy to see that Assumption \ref{Assumption:regularity} describes the regularity of the regression function $f_\rho$ by showing that larger index $r$ implies better regularity of $f_\rho$. In particular, \eqref{regularitycondition} with $r\geq1/2$ implies $f\in\mathcal H_K$ while $r<1/2$ yields $f\notin\mathcal H_K$.
The third assumption is the capacity assumption measured by the
effective dimension 
$$
        \mathcal{N}(\lambda)={\rm Tr}((\lambda I+L_K)^{-1}L_K),  \qquad \lambda>0.
$$

\begin{assumption}\label{Assumption:effective dimension}
 There exists some $s\in(0,1]$ such that
\begin{equation}\label{assumption on effect}
      \mathcal N(\lambda)\leq C_0\lambda^{-s},
\end{equation}
where $C_0\geq 1$ is  a constant independent of $\lambda$.
\end{assumption}

Based on the above three assumptions, it can be found in 
\cite{caponnetto2007optimal,steinwart2009optimal,lin2017distributed,chang2017distributed} the following lemma.

\begin{lemma}\label{Lemma:Optimal-KRR}
Let $\delta\in(0,1)$. Under Assumptions 1-3 with $0<s\leq 1$ and $\frac12\leq r\leq 1$, if $ \lambda^*=c_0|D|^{-\frac{1}{2r+s}}$, then with confidence at least $1-\delta$, there holds
$$
     |f_{D,\lambda^*}-f_\rho\|_\rho\leq \tilde{C}|D|^{-\frac{r}{2r+s}}\log\frac2\delta,
$$
and
$$
  \|f_{D,\lambda^*}-f_\rho\|_K\leq \tilde{C}|D|^{-\frac{r-1/2}{2r+s}}\log\frac2\delta,
$$
where $c_0,\tilde{C}$  are constants independent of $|D|,\lambda,\delta$.
\end{lemma}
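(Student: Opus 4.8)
The plan is to work entirely in the integral-operator framework of \cite{caponnetto2007optimal,smale2007learning} and to split the total error into a deterministic approximation (bias) part and a stochastic sample (variance) part. Introduce the empirical integral operator $L_{K,D}=\frac1{|D|}\sum_{(x,y)\in D}\langle\cdot,K_x\rangle_K K_x$ and the data function $g_D=\frac1{|D|}\sum_{(x,y)\in D}yK_x$, so that \eqref{KRR-global} has the closed form $f_{D,\lambda}=(L_{K,D}+\lambda I)^{-1}g_D$. Introducing the noise-free regularized function $f_\lambda=(L_K+\lambda I)^{-1}L_Kf_\rho$, I would treat both target norms at once by controlling $\|L_K^{1/2-t}(f_{D,\lambda}-f_\rho)\|_K$ for $t\in\{0,1/2\}$: using the isometry $\|L_K^{1/2}g\|_K=\|g\|_\rho$, the choice $t=0$ recovers $\|f_{D,\lambda}-f_\rho\|_\rho$ and $t=1/2$ recovers $\|f_{D,\lambda}-f_\rho\|_K$. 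The basic decomposition is then
$$L_K^{1/2-t}(f_{D,\lambda}-f_\rho)=L_K^{1/2-t}(f_{D,\lambda}-f_\lambda)+L_K^{1/2-t}(f_\lambda-f_\rho),$$
so that it suffices to bound these two families of terms uniformly in $t$.

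The bias term is handled by the regularity Assumption \ref{Assumption:regularity} alone. From $(L_K+\lambda I)f_\lambda=L_Kf_\rho$ one gets $f_\lambda-f_\rho=-\lambda(L_K+\lambda I)^{-1}f_\rho$, and substituting $f_\rho=L_K^rh_\rho$ together with the isometry gives $\|L_K^{1/2-t}(f_\lambda-f_\rho)\|_K=\lambda\,\|(L_K+\lambda I)^{-1}L_K^{r-t}h_\rho\|_\rho$. A scalar spectral estimate of $\lambda\mu^{r-t}/(\mu+\lambda)$ over the spectrum of $L_K$, valid because $0\le r-t\le1$ for $\tfrac12\le r\le1$ and $t\in\{0,1/2\}$, yields $\|L_K^{1/2-t}(f_\lambda-f_\rho)\|_K\le\lambda^{r-t}\|h_\rho\|_\rho$, i.e.\ the exponent $\lambda^r$ in the $\rho$-norm and $\lambda^{r-1/2}$ in the $K$-norm.

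The stochastic term is the heart of the argument. Inserting resolvents, I would write
$$f_{D,\lambda}-f_\lambda=(L_{K,D}+\lambda I)^{-1}\bigl[(g_D-L_{K,D}f_\rho)+(L_{K,D}-L_K)(f_\rho-f_\lambda)\bigr],$$
and then factor $L_K^{1/2-t}(f_{D,\lambda}-f_\lambda)$ through $(L_K+\lambda I)^{\pm1/2}$. This produces three pieces: the deterministic factor $\|L_K^{1/2-t}(L_K+\lambda I)^{-1/2}\|\le\lambda^{-t}$; the preconditioned inverse $(L_K+\lambda I)^{1/2}(L_{K,D}+\lambda I)^{-1}(L_K+\lambda I)^{1/2}$, which is bounded by an absolute constant once the operator perturbation $(L_K+\lambda I)^{-1/2}(L_K-L_{K,D})(L_K+\lambda I)^{-1/2}$ is small, controlled by a Bernstein inequality for self-adjoint Hilbert–Schmidt operators whose variance proxy is governed by $\mathcal N(\lambda)$; and the effective noise $(L_K+\lambda I)^{-1/2}(g_D-L_{K,D}f_\rho)$. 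The last piece is a sum of i.i.d.\ mean-zero $\mathcal H_K$-valued variables $\frac1{|D|}(y-f_\rho(x))K_x$ whose second moment is again $\mathcal N(\lambda)$-controlled, and here the Bernstein moment condition \eqref{Boundedness for output} of Assumption \ref{Assumption:boundedness} supplies the higher-moment bounds for a vector-valued Bernstein inequality, giving a factor $\sqrt{\mathcal N(\lambda)/|D|}$ with a $\log\frac2\delta$ confidence factor. The second bracketed term contributes only higher order because $\|f_\rho-f_\lambda\|$ is itself a bias quantity. Altogether the sample error scales like $\lambda^{-t}\sqrt{\mathcal N(\lambda)/|D|}\,\log\frac2\delta$.

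Finally, invoking the capacity Assumption \ref{Assumption:effective dimension} replaces $\mathcal N(\lambda)$ by $C_0\lambda^{-s}$, so the total error is $\lesssim\lambda^{r-t}+\lambda^{-t-s/2}|D|^{-1/2}\log\frac2\delta$; choosing $\lambda^*=c_0|D|^{-1/(2r+s)}$ balances the two contributions and yields $|D|^{-r/(2r+s)}$ for $t=0$ and $|D|^{-(r-1/2)/(2r+s)}$ for $t=1/2$, as claimed. I expect the main obstacle to be making the concentration step valid uniformly down to $\lambda=\lambda^*$: one must verify an admissibility (effective-sample-size) condition of the form $\lambda|D|\gtrsim\kappa^2\log\frac2\delta$, with $\kappa=\sup_{x\in\mathcal X}\sqrt{K(x,x)}<\infty$ for the Mercer kernel, so that the event on which the operator perturbation is controlled has probability at least $1-\delta$. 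Checking that $\lambda^*$ satisfies this under Assumption \ref{Assumption:effective dimension} (which it does since $\lambda^*|D|=c_0|D|^{(2r+s-1)/(2r+s)}\to\infty$) and carefully absorbing all logarithmic and dimensional constants into a single $\tilde C$ with one clean $\log\frac2\delta$ factor is the delicate bookkeeping of the proof.
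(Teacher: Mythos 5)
Your proposal is correct and follows the standard integral-operator route; the paper itself does not prove Lemma \ref{Lemma:Optimal-KRR} but cites it to \cite{caponnetto2007optimal,steinwart2009optimal,lin2017distributed,chang2017distributed}, and the machinery it does develop in Section 4 (the quantities $\mathcal Q_{D,\lambda},\mathcal P_{D,\lambda},\mathcal S_{D,\lambda}$ of Lemma \ref{Lemma:Q} together with the decomposition in Lemma \ref{Lemma:Error-Decomp-KRR} and the balancing $\lambda^*=c_0|D|^{-1/(2r+s)}$) is exactly the argument you describe. The only cosmetic difference is that the paper pivots through the empirical noise-free function $f^{\diamond}_{D,\lambda}=(L_{K,D}+\lambda I)^{-1}L_{K,D}f_\rho$, handling the resulting bias term via the Cordes inequality, rather than through the population regularization $f_\lambda=(L_K+\lambda I)^{-1}L_Kf_\rho$, which spares it your cross term $(L_{K,D}-L_K)(f_\rho-f_\lambda)$; both routes yield the same rates under the same admissibility condition on $\lambda^*|D|$.
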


Recalling \cite{caponnetto2007optimal,fischer2020sobolev} that the established learning rates  in Lemma \ref{Lemma:Optimal-KRR} cannot be improved further, KRR is one of the most powerful learning schemes to tackle data satisfying Assumptions 1-3, provided the regularization parameter is appropriately selected.
However, as shown in Lemma \ref{Lemma:Optimal-KRR}, the regularization parameter $\lambda^*$ to achieve the optimal learning rates of KRR depends on the regularity index $r$ and capacity decaying rate $s$ that are difficult to check in practice.   Feasible  strategies to determine the regularization parameters of KRR are thus highly desired. 

Besides HO \cite{gyorfi2002distribution,caponnetto2010cross}, balancing principle, a special realization of LP,  proposed in \cite{de2010adaptive},  is the first strategy, to the best of our knowledge, to  adaptively determine the regularization parameter of KRR. Based on bias-variance analysis, \cite{de2010adaptive} derived  capacity-independent  learning rates for KRR with the proposed balancing principle, which was improved to capacity-dependent in the recent work \cite{lu2020balancing} by introducing the empirical effective dimension 
\begin{equation}\label{Definition-empi-effec}
   \mathcal N_{D}(\lambda):
   ={\rm Tr}[(\lambda|D|I+\mathbb K)^{-1}\mathbb K],\qquad \forall\ \lambda>0,
\end{equation}
where ${\rm Tr}(A)$ denotes the trace of the matrix (or operator) $A$.   It should be mentioned that the balancing principle developed in \cite{lu2020balancing} does not adapt to different norms, i.e., it requires different  strategies to guarantee good performance of KRR in learning functions in different  spaces. This phenomenon was observed by \cite{blanchard2019lepskii}, and a novel realization of LP was presented. To be detailed,  
for  $q\in (0,1)$ and $\lambda_k=q^k$, define 
\begin{eqnarray}\label{Def.WD}
\mathcal W_{D,\lambda}:=
 \frac{1}{|D|\sqrt{ \lambda}}+\left( 1+ \frac{1}{\sqrt{\lambda|D|}}\right) \sqrt{\frac{\max\{\mathcal{N}_D(\lambda),1\}}{|D|}},
\end{eqnarray}
\begin{eqnarray}\label{Def.U}
     \mathcal U_{D,\lambda,\delta} :=   
     \sqrt{\frac{  \log \left(1+8\log \frac{64}{\delta}\frac{1}{\sqrt{\lambda|D|}}\max\{1,\mathcal N_D(\lambda)\}\right)}{\lambda|D|}}   
\end{eqnarray}
and
\begin{equation}\label{Def.K-q}
    K_q:=K_{\delta,D,q}:=\min_{0\leq k\leq -\log_q |D|}\left\{ C_1^*\mathcal U_{D,\lambda_k,\delta}\leq 1/4\right\}  
\end{equation}
with   $C_1^*:=\max\{(\kappa^2+1)/3,2\sqrt{\kappa^2+1}\}$ and $\kappa=\sup_{x\in\mathcal X}\sqrt{K(x,x)}$.

Denote 
\begin{equation}\label{set-q}
    \Lambda_q:=\{\lambda_k=q^k:k=0,1,\dots,K_q\}.
\end{equation}
LP proposed in \cite{blanchard2019lepskii}\footnote{The defined LP in \eqref{Eq:LP_stopping_rule} is slightly different from that in \cite{blanchard2019lepskii}, but their basic ideas are same.}  is defined by
 \begin{eqnarray}\label{Eq:LP_stopping_rule}
        \lambda_{LP}&:=&
       \max 
        \left\{\lambda_k\in\Lambda_q: \|(L_{K,D}+\lambda_k)^\frac12(f_{D,\lambda_{k'}}-f_{D,\lambda_k})\|_K \right.\nonumber\\
       &&\left.\leq  C_{LP} \mathcal W_{D, \lambda_k}\log^3\frac{8}\delta, k'=k+1,\dots,K_q\right\}, 
\end{eqnarray}
where   $\delta\in(0,1)$ denotes the confidence level, $C_{LP}>0$ is a  constant  independent of $|D|,r,s,\lambda_k$, $\delta$ and $L_{K,D}:\mathcal H_K\rightarrow\mathcal H_K$ is the positive operator defined by
\begin{equation}\label{empirical-integral-operator}
     L_{K,D}f:=\frac1{|D|}\sum_{(x,y)\in D}f(x)K_x.
\end{equation}
The following lemma 
 derive from \cite{blanchard2019lepskii}  shows the feasibility of \eqref{Eq:LP_stopping_rule}.      
 
\begin{lemma}\label{Lemma:Optimal-KRR-LP1}
Let $\delta\in(0,1)$. If Assumptions 1-3 hold with $0<s\leq 1$ and $\frac12\leq r\leq 1$,   then with confidence at least $1-\delta$, there holds
$$
     \|f_{D,\lambda_{LP}}-f_\rho\|_\rho\leq \tilde{C}|D|^{-\frac{r}{2r+s}}(\log\log |D|) \log\frac2\delta,
$$
and
$$
  \|f_{D,\lambda_{LP}}-f_\rho\|_K\leq \tilde{C}|D|^{-\frac{r-1/2}{2r+s}}(\log\log |D| )\log\frac2\delta,
$$
where $ \tilde{C}$  is a constant  independent of $|D|,\delta$.
\end{lemma}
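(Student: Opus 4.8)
The plan is to run the classical two-sided Lepskii (balancing) argument in the single weighted norm $R(\lambda):=\|(L_{K,D}+\lambda)^{1/2}(f_{D,\lambda}-f_\rho)\|_K$, which dominates both target norms because $\|f\|_\rho=\|L_K^{1/2}f\|_K\le\|(L_K+\lambda)^{1/2}f\|_K$ and $\sqrt{\lambda}\,\|f\|_K\le\|(L_K+\lambda)^{1/2}f\|_K$. The entire argument is carried out on one high-probability event on which a short list of operator-concentration estimates holds simultaneously over the candidate grid $\Lambda_q$.

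First I would build the good event. Writing $g_D:=\frac{1}{|D|}\sum_{(x,y)\in D}yK_x$ so that $f_{D,\lambda}=(L_{K,D}+\lambda)^{-1}g_D$, and $\xi_D:=g_D-L_{K,D}f_\rho$, I would use a Bernstein-type inequality for $\mathcal{H}_K$-valued variables together with Assumption 1 to show, with confidence $1-\delta$ and uniformly in $\lambda\in\Lambda_q$: (i) $\|(L_K+\lambda)^{1/2}(L_{K,D}+\lambda)^{-1/2}\|$ and its inverse are bounded by absolute constants, so that $\mathcal{N}_D(\lambda)$ and $\mathcal{N}(\lambda)$ are comparable; and (ii) the normalized noise $\|(L_{K,D}+\lambda)^{-1/2}\xi_D\|_K$ is controlled by $C_1^*\,\mathcal{U}_{D,\lambda,\delta}$. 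Since $\Lambda_q$ holds only $K_q=O(\log|D|)$ points, demanding all these bounds at once forces a union bound with $\delta$ replaced by $\delta/K_q$; this is precisely where the adaptation factor $\log\log|D|$ enters, and it is the known price of the logarithmically spaced grid $\lambda_k=q^k$ that ASUS is later designed to avoid. Splitting $f_{D,\lambda}-f_\rho$ into the deterministic bias $(L_K+\lambda)^{-1}L_Kf_\rho-f_\rho$ and the stochastic part $(L_{K,D}+\lambda)^{-1}\xi_D$, bounding the former by spectral calculus under Assumption 2 (via the isometry $\|L_K^{1/2}\cdot\|_K=\|\cdot\|_\rho$, which yields $\lesssim\lambda^{r}\|h_\rho\|_\rho$ for $\tfrac12\le r\le1$) and the latter by (i)--(ii), I obtain the scale-wise estimate $R(\lambda)\le C\bigl(\lambda^{r}\|h_\rho\|_\rho+\mathcal{W}_{D,\lambda}\log^3\tfrac{8}{\delta}\bigr)$.

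The technical heart, and the step I expect to be the main obstacle, is to control the weighted difference of two KRR estimates that drives the stopping rule \eqref{Eq:LP_stopping_rule}. Using the resolvent identity
\[
f_{D,\lambda_{k'}}-f_{D,\lambda_k}=(\lambda_k-\lambda_{k'})(L_{K,D}+\lambda_{k'})^{-1}(L_{K,D}+\lambda_k)^{-1}g_D,
\]
I would write this difference as a single functional of the bias and of the shared noise $\xi_D$, which is the point at which the two stochastic parts genuinely cancel. The difficulty is that a crude operator-norm bound on the resulting multiplier inflates the noise contribution by the cross-scale factor $\sqrt{\lambda_k/\lambda_{k'}}$, which is fatal for fine $\lambda_{k'}$; the remedy is a refined spectral estimate in which the multiplier is integrated against the spectral measure of $\xi_D$ and the excess is absorbed by the empirical effective dimension $\mathcal{N}_D(\lambda_k)$, exploiting that the multiplier is large only on the low-dimensional small-eigenvalue part where $\xi_D$ carries little energy. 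Carrying this out yields $\|(L_{K,D}+\lambda_k)^{1/2}(f_{D,\lambda_{k'}}-f_{D,\lambda_k})\|_K\le C\bigl(\lambda_k^{r}\|h_\rho\|_\rho+\mathcal{W}_{D,\lambda_k}\log^3\tfrac{8}{\delta}\bigr)$ for every $k'>k$, which is exactly what makes the admissibility test of \eqref{Eq:LP_stopping_rule} meaningful.

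With these two estimates the oracle inequality is routine. Let $\lambda_+$ be the largest candidate whose bias does not exceed its variance; Assumption 3 gives $\lambda_+\sim|D|^{-1/(2r+s)}$ and $R(\lambda_+)\lesssim\mathcal{W}_{D,\lambda_+}\log^3\tfrac{8}{\delta}$. In the variance-dominated range $\lambda_k\lesssim\lambda_+$ the difference estimate above stays below $C_{LP}\mathcal{W}_{D,\lambda_k}\log^3\tfrac{8}{\delta}$, so such parameters are admissible; conversely, for $\lambda_k$ much coarser than $\lambda_+$ the bias $\lambda_k^{r}$ dominates $\mathcal{W}_{D,\lambda_k}$ and forces the difference to any near-oracle estimate above the threshold, so admissibility fails. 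Hence $\lambda_{LP}$, the coarsest admissible parameter, satisfies $\lambda_{LP}\ge\lambda_+$ and $\lambda_{LP}\lesssim\lambda_+$, i.e.\ it sits at the bias--variance boundary $\lambda_{LP}\sim|D|^{-1/(2r+s)}$. Since $\lambda_+$ is then no coarser than $\lambda_{LP}$ it lies in the comparison set of $\lambda_{LP}$, so the stopping rule itself bounds $\|(L_{K,D}+\lambda_{LP})^{1/2}(f_{D,\lambda_+}-f_{D,\lambda_{LP}})\|_K$, and the triangle inequality (with the bounded transfer $\sqrt{\lambda_{LP}/\lambda_+}\lesssim1$) gives $R(\lambda_{LP})\lesssim\mathcal{W}_{D,\lambda_+}\log^3\tfrac{8}{\delta}\lesssim|D|^{-r/(2r+s)}\log^3\tfrac{8}{\delta}$. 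Finally I would convert this weighted bound into the two stated norms through $\|f\|_\rho\le\|(L_K+\lambda_{LP})^{1/2}f\|_K$ and $\sqrt{\lambda_{LP}}\,\|f\|_K\le\|(L_K+\lambda_{LP})^{1/2}f\|_K$, after replacing $L_{K,D}$ by $L_K$ via comparison (i); substituting $\lambda_{LP}\sim|D|^{-1/(2r+s)}$ then produces $\|f_{D,\lambda_{LP}}-f_\rho\|_\rho\lesssim|D|^{-r/(2r+s)}$ and $\|f_{D,\lambda_{LP}}-f_\rho\|_K\lesssim|D|^{-(r-1/2)/(2r+s)}$, each carrying the single adaptation factor $\log\log|D|$ and the confidence factor $\log\tfrac{2}{\delta}$.
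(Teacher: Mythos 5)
A preliminary remark: the paper does not actually prove Lemma \ref{Lemma:Optimal-KRR-LP1}; it imports the statement from the cited Lepskii-principle reference. Your proposal reconstructs the standard Lepskii architecture --- a good event obtained by a union bound over the $O(\log|D|)$-point grid (which is indeed where the $\log\log|D|$ factor comes from), scale-wise oracle bounds, admissibility of all sub-oracle scales, and then exploiting the admissibility of $\lambda_{LP}$ itself against the oracle $\lambda_+$ --- and that skeleton is the right one.

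Two of your steps, however, do not go through as written. The first is what you yourself call the technical heart: the claim that $\|(L_{K,D}+\lambda_k)^{1/2}(f_{D,\lambda_{k'}}-f_{D,\lambda_k})\|_K\lesssim\lambda_k^{r}+\mathcal W_{D,\lambda_k}\log^3\frac{8}{\delta}$ for \emph{every} $k'>k$, i.e.\ with the variance at the \emph{coarse} scale, justified by saying the spectral multiplier is large only where $\xi_D:=S_D^Ty_D-L_{K,D}f_\rho$ carries little energy. Expanding in the eigenbasis $\{\sigma_i,e_i\}$ of $L_{K,D}$, the stochastic part of the difference has squared weights $(\lambda_k-\lambda_{k'})^2\big[(\sigma_i+\lambda_k)(\sigma_i+\lambda_{k'})^2\big]^{-1}$, which exceed the weights $(\sigma_i+\lambda_k)^{-1}$ of $\mathcal P_{D,\lambda_k}$ by a factor up to $(\lambda_k/\lambda_{k'})^2$ on eigenvalues $\sigma_i\lesssim\lambda_{k'}$; the energy of $\xi_D$ there is governed by $\mathcal N(\lambda_{k'})$, not $\mathcal N(\lambda_k)$, and under Assumption \ref{Assumption:effective dimension} the resulting bound is of order $(\lambda_k/\lambda_{k'})^{(1+s)/2}\mathcal W_{D,\lambda_k}$, which blows up as $k'\to K_q$. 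So the asserted cancellation is not there. The classical Lepskii argument sidesteps this by indexing both the comparison norm and the threshold by the \emph{finer} scale $\lambda_{k'}$, in which case admissibility of every $\lambda_k\le\lambda_+$ follows from the plain triangle inequality and the two scale-wise oracle bounds, with no refined spectral estimate needed; your proof should be recast in that form (the rule \eqref{Eq:LP_stopping_rule} is itself only a paraphrase of the reference). The second gap is the two-sided localization $\lambda_{LP}\lesssim\lambda_+$, which you invoke to make the transfer factor $\sqrt{\lambda_{LP}/\lambda_+}$ bounded. Assumption \ref{Assumption:regularity} is only an upper smoothness condition, so there is no lower bound on the bias forcing coarse parameters to fail the test, and this localization is not provable. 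It is also unnecessary: since $\lambda_+\le\lambda_{LP}$, measure the error of $f_{D,\lambda_{LP}}$ in the weighted norm at the oracle scale $\lambda_+$, where $(L_{K,D}+\lambda_+)^{1/2}\preceq(L_{K,D}+\lambda_{LP})^{1/2}$ makes the operator comparison go the right way for free; the conversions $\|f\|_\rho\le\|(L_K+\lambda_+)^{1/2}f\|_K$ and $\sqrt{\lambda_+}\,\|f\|_K\le\|(L_K+\lambda_+)^{1/2}f\|_K$ then deliver both stated rates without any control of $\lambda_{LP}$ from above.
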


Compared with Lemma \ref{Lemma:Optimal-KRR}, the above lemma shows that the regularization parameter determined by \eqref{Eq:LP_stopping_rule} can achieve the optimal learning rates of KRR up to a double logarithmic factor. It can be found in \eqref{Eq:LP_stopping_rule} and Lemma \ref{Lemma:Optimal-KRR-LP1} that there are still two unsettled  issues for LP. From the theoretical perspective, it would be interesting to remove the double logarithmic term in Lemma \ref{Lemma:Optimal-KRR-LP1} so that KRR with LP can achieve the optimal learning rates. From the numerical consideration, it is necessary to remove the recurrently pairwise comparisons in \eqref{Eq:LP_stopping_rule}.  



\section{Adaptive Parameter Selection for KRR}
 
In this section,   we propose an early-stopping type realization of LP  to remove the recurrently pairwise comparisons and prove that the corresponding KRR succeeds in achieving the optimal learning rates in Lemma \ref{Lemma:Optimal-KRR}.
 Before presenting the detailed implementation, we introduce the spectral property of KRR at first to embody the role of subdivision of the parameter interval in the following property.

\begin{proposition}\label{Prop:crucial-stopping-KRR}
If Assumption \ref{Assumption:boundedness} and Assumption \ref{Assumption:regularity} hold with $\frac12\leq r\leq 1$, then for any $\lambda,\lambda'$ satisfying $C_1^*\max\{\mathcal U_{D,\lambda,\delta},\mathcal U_{D,\lambda',\delta} \}\leq 1/4$,
 with confidence $1-\delta$, there holds
\begin{eqnarray}\label{spectral-iteration}
     &&\|(L_{K,D}+\lambda I)^{-1/2}(f_{D,\lambda}-f_{D,\lambda'})\|_K
     \leq  
      2^{r+1/2} \|h_\rho\|_\rho\frac{|\lambda'-\lambda|}{\lambda}\lambda^{r} \nonumber\\
      &+&
      16\sqrt{2}(\kappa M +\gamma)\frac{|\lambda'-\lambda|}{\lambda'} \mathcal W_{D,\lambda}\log^2\frac8\delta.
\end{eqnarray}
\end{proposition}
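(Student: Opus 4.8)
The plan is to start from the normal equation of \eqref{KRR-global}, which gives the closed form $f_{D,\lambda}=(L_{K,D}+\lambda I)^{-1}g_D$ with $g_D:=\frac1{|D|}\sum_{(x,y)\in D}yK_x$. Writing $\mathcal A_\lambda:=L_{K,D}+\lambda I$ and invoking the resolvent identity $\mathcal A_\lambda^{-1}-\mathcal A_{\lambda'}^{-1}=(\lambda'-\lambda)\mathcal A_\lambda^{-1}\mathcal A_{\lambda'}^{-1}$, the difference becomes
\[
f_{D,\lambda}-f_{D,\lambda'}=(\lambda'-\lambda)\mathcal A_\lambda^{-1}\mathcal A_{\lambda'}^{-1}g_D,
\]
so that, after applying the weight $\mathcal A_\lambda^{-1/2}$, the quantity to be controlled is $\|(\lambda'-\lambda)\mathcal A_\lambda^{-3/2}\mathcal A_{\lambda'}^{-1}g_D\|_K$. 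I would then split $g_D=L_{K,D}f_\rho+\xi_D$, where $\xi_D:=\frac1{|D|}\sum_{(x,y)\in D}(y-f_\rho(x))K_x$ is the centered empirical noise, and bound the resulting signal and noise pieces separately, matching them to the two summands on the right-hand side of \eqref{spectral-iteration}.

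For the signal piece $(\lambda'-\lambda)\mathcal A_\lambda^{-3/2}\mathcal A_{\lambda'}^{-1}L_{K,D}f_\rho$, I would insert the source condition $f_\rho=L_K^rh_\rho$ of Assumption \ref{Assumption:regularity} and aim to extract the factor $\frac{|\lambda'-\lambda|}{\lambda}\lambda^r$. The basic tools are the elementary spectral inequalities $\frac{\sigma^a}{(\sigma+\lambda)^b}\le\lambda^{a-b}$ for $0\le a\le b$ applied to the self-adjoint operator $L_{K,D}$, together with the comparison between empirical and population resolvents: the hypothesis $C_1^*\mathcal U_{D,\lambda,\delta}\le1/4$ is exactly calibrated so that concentration of $(L_K+\lambda I)^{-1/2}(L_K-L_{K,D})(L_K+\lambda I)^{-1/2}$ yields $\|(L_K+\lambda I)^{1/2}\mathcal A_\lambda^{-1/2}\|\le\sqrt2$ together with its reciprocal bound. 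These are what let me pass from $L_{K,D}$ to $L_K$, convert $L_K^r$ into the scale $\lambda^r$, and account for both the constant $2^{r+1/2}$ and the prefactor $\|h_\rho\|_\rho$.

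For the noise piece $(\lambda'-\lambda)\mathcal A_\lambda^{-3/2}\mathcal A_{\lambda'}^{-1}\xi_D$, the plan is to peel off a weighted noise factor governed by a Bernstein-type concentration: under Assumption \ref{Assumption:boundedness}, with confidence $1-\delta$ one has $\|\mathcal A_\lambda^{-1/2}\xi_D\|_K\lesssim(\kappa M+\gamma)\mathcal W_{D,\lambda}\log^2\frac8\delta$, the factor $\kappa M+\gamma$ coming from the Bernstein moment bound and the $\log^2\frac8\delta$ from the exponential tail. The residual operator $(\lambda'-\lambda)\mathcal A_\lambda^{-1}\mathcal A_{\lambda'}^{-1}$, which by the resolvent identity equals $\mathcal A_\lambda^{-1}-\mathcal A_{\lambda'}^{-1}$, must then be arranged to contribute the factor $\frac{|\lambda'-\lambda|}{\lambda'}$, using $\|(\lambda'-\lambda)\mathcal A_{\lambda'}^{-1}\|=\frac{|\lambda'-\lambda|}{\lambda'}$ and the boundedness of the spectrum of $L_{K,D}$. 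A union bound over the operator-concentration event (needed in the signal step) and the noise-concentration event, with $\delta$ suitably rescaled, then delivers \eqref{spectral-iteration} with overall confidence $1-\delta$.

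I expect the main obstacle to be the weighted operator-norm bookkeeping created by the negative power $\mathcal A_\lambda^{-1/2}$, which magnifies the small-eigenvalue part of $\mathcal A_\lambda^{-3/2}\mathcal A_{\lambda'}^{-1}$; extracting precisely $\frac{|\lambda'-\lambda|}{\lambda}$ in the signal term and $\frac{|\lambda'-\lambda|}{\lambda'}$ in the noise term, rather than a coarser quotient, is the delicate point. It is here that the full hypothesis $C_1^*\max\{\mathcal U_{D,\lambda,\delta},\mathcal U_{D,\lambda',\delta}\}\le1/4$ is needed, since it keeps the empirical and population resolvents comparable simultaneously at both scales $\lambda$ and $\lambda'$, which is what should allow the two resolvent factors to be separated cleanly without incurring extra spurious powers of $\lambda^{-1}$.
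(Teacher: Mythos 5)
Your overall architecture is the paper's: closed form $f_{D,\lambda}=\mathcal A_\lambda^{-1}S_D^Ty_D$ with $\mathcal A_\mu:=L_{K,D}+\mu I$, the resolvent identity, the signal/noise split $S_D^Ty_D=L_{K,D}f_\rho+\xi_D$, the $\mathcal Q_{D,\lambda}\le\sqrt2$ comparison, and the Bernstein bound on the weighted noise. But there is a genuine gap at the central step, and it is exactly the point you yourself flag as ``the delicate point'' and leave unresolved. You take the weight in the statement literally as $(L_{K,D}+\lambda I)^{-1/2}$ and therefore set out to bound $|\lambda'-\lambda|\,\|\mathcal A_\lambda^{-3/2}\mathcal A_{\lambda'}^{-1}g_D\|_K$. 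Under that reading the claimed right-hand side is unreachable by the tools you list: for the signal piece the spectral calculus gives at best $\|\mathcal A_\lambda^{-3/2}L_K^{r-1/2}\|\le\lambda^{-1}\|\mathcal A_\lambda^{-1/2}L_K^{r-1/2}\|\lesssim\lambda^{r-2}$, i.e.\ $\frac{|\lambda'-\lambda|}{\lambda}\lambda^{r-1}$ rather than the claimed $\frac{|\lambda'-\lambda|}{\lambda}\lambda^{r}$, and the noise piece inherits the same spurious $\lambda^{-1}$, since $\|\mathcal A_\lambda^{-3/2}\mathcal A_{\lambda'}^{-1}\xi_D\|_K\le\frac1{\lambda\lambda'}\|\mathcal A_\lambda^{-1/2}\xi_D\|_K$ is the best available once only $\mathcal P_{D,\lambda}$-type concentration is on hand. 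Your hope that the two-scale hypothesis $C_1^*\max\{\mathcal U_{D,\lambda,\delta},\mathcal U_{D,\lambda',\delta}\}\le1/4$ lets the resolvent factors ``be separated cleanly without incurring extra spurious powers of $\lambda^{-1}$'' is unfounded: that hypothesis only controls the empirical-versus-population comparison (it yields $\mathcal Q_{D,\lambda}\le\sqrt2$); no comparability statement can absorb a genuine extra factor $\mathcal A_\lambda^{-1}$, whose norm is of order $\lambda^{-1}$ for reasons independent of the data. (In fact the paper's own proof uses only $\mathcal Q_{D,\lambda}$, so only the condition at scale $\lambda$ is ever needed.)

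The resolution is that the exponent in the displayed proposition is a sign typo: the quantity actually proved, and the one used downstream in \eqref{stop-motivation-2} and in the ASUS rule \eqref{stopping-1}, is $\|(L_{K,D}+\lambda I)^{1/2}(f_{D,\lambda}-f_{D,\lambda'})\|_K$, so after the resolvent identity the operator to control is $\mathcal A_\lambda^{-1/2}\mathcal A_{\lambda'}^{-1}$, not $\mathcal A_\lambda^{-3/2}\mathcal A_{\lambda'}^{-1}$. With that reading your outline closes and coincides with the paper's proof step for step: for the noise, peel off $\|(\lambda'-\lambda)\mathcal A_{\lambda'}^{-1}\|\le|\lambda'-\lambda|/\lambda'$ and bound $\|\mathcal A_\lambda^{-1/2}\xi_D\|_K\le\mathcal Q_{D,\lambda}\mathcal P_{D,\lambda}\le\sqrt2\cdot16(\kappa M+\gamma)\mathcal W_{D,\lambda}\log^2\frac8\delta$, which is \eqref{Bound-P} combined with \eqref{bound-q}; for the signal, write $\mathcal A_{\lambda'}^{-1}L_{K,D}f_\rho=(I-\lambda'\mathcal A_{\lambda'}^{-1})f_\rho$ (operator norm at most $2$) and use the source condition with the Cordes inequality to get $\|\mathcal A_\lambda^{-1/2}L_K^{r-1/2}\|\le\lambda^{r-1}\mathcal Q_{D,\lambda}^{2r-1}$, which produces exactly the constant $2\cdot2^{(2r-1)/2}=2^{r+1/2}$ and the factor $\frac{|\lambda'-\lambda|}{\lambda}\lambda^r$. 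So your plan is correct modulo this one reading, but as written the decisive estimate fails by a factor of $\lambda^{-1}$, and the mechanism you propose to repair it cannot do so.
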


The proof of the proposition will be postponed in the next section.
Proposition \ref{Prop:crucial-stopping-KRR} quantifies the role of subdivision via the terms $\frac{|\lambda'-\lambda|}{\lambda}$ and $\frac{|\lambda'-\lambda|}{\lambda'}$. If 
 $\lambda_k=q^k$, which has adopted in the literature \cite{de2010adaptive,lu2020balancing,blanchard2019lepskii}, then
 $$
\max \left\{\frac{|\lambda_k-\lambda_{k+1}|}{\lambda_k},
\frac{|\lambda_k-\lambda_{k+1}|}{\lambda_{k+1}}\right\}\leq \frac{1-q}{q},\qquad\forall k=0,1,\dots.
$$
We get from \eqref{spectral-iteration} that 
\begin{equation}\label{stop-motivation-2}
   \|(L_{K,D}+\lambda_{k} I)^{1/2}(f_{D,\lambda_k}-f_{D,\lambda_{k+1}})\|_K
  \leq
   \bar{C}_1(1-q)q^{-1}(\mathcal W_{D,\lambda_k}
   + \lambda_k^{r})\log^2\frac8\delta,
\end{equation}
with $\bar{C}_1:=\max\{2^{r+1/2}\|h_\rho\|_\rho,16\sqrt{2}(\kappa M +\gamma)\}$. However, if we impose more delicate subdivision scheme, i.e., 
 $\lambda_k=\frac1{kb}$ for some $b\in\mathbb N$, then
$$
\max \left\{\frac{|\lambda_k-\lambda_{k+1}|}{\lambda_k},
\frac{|\lambda_k-\lambda_{k+1}|}{\lambda_{k+1}}\right\}\leq \frac1k=b\lambda_k,
$$
which follows 
\begin{equation}\label{stop-motivation-3}
   \|(L_{K,D}+\lambda_k I)^{1/2}(f_{D,\lambda_k}-f_{D,\lambda_{k+1}})\|_K
  \leq
   \bar{C}_1b\lambda_k(\mathcal W_{D,\lambda_k}
   + \lambda_k^{r})\log^2\frac8\delta.
\end{equation}
Comparing \eqref{stop-motivation-3} with \eqref{stop-motivation-2}, there is an additional $\lambda_k$ in the bound of $\|(L_{K,D}+\lambda_k I)^{1/2}(f_{D,\lambda_k}-f_{D,\lambda_{k+1}})\|_K$, showing the power of delicate subdivision of the parameter interval. It should be highlighted that similar results as Proposition \ref{Prop:crucial-stopping-KRR} frequently do not hold for the general spectral regularization algorithms \cite{gerfo2008spectral,guo2017learning,lu2020balancing,blanchard2019lepskii} with filters $g_{\lambda}$ since it is difficult to quantify the difference $g_{\lambda}(L_{K,D})-g_{\lambda'}(L_{K,D})$ directly to embody the role of sub-division. 
  We then propose  the following adaptive selection with uniform subdivision (ASUS) for KRR.

\begin{definition}[Adaptive selection with uniform  subdivision (ASUS)]\label{Definition:adaptive-uni}
 For $b\in\mathbb N$, $\lambda_k=\frac1{bk}$ and $\delta\in(0,1)$, write
\begin{equation}\label{Def.K}
    K^*:=K_{\delta,D,b}:=\min_{0\leq k\leq |D|/b}\left\{ C_1^*\mathcal U_{D,\lambda_k,\delta}\leq 1/4\right\}
\end{equation}
and
\begin{equation}\label{def.candidate-uni}
    \Lambda^{uni}_b:=
    \left\{\lambda_k:= \frac1{bk}:0\leq k\leq K^*    \right\}.
\end{equation}
For $\lambda_k\in\Lambda_b^{uni}$ with $k=K^*,K^*-1\dots,1$,
define $\hat{k}_{uni}$ to be the first $k$ satisfying
\begin{equation}\label{stopping-1}
  \|(L_{K,D}+\lambda_{k-1} I)^{1/2}(f_{\lambda_{k},D}-f_{\lambda_{k-1},D})\|_{K}\geq
C_{US} \mathcal W_{D, \lambda_{k}}\log^2\frac{8}\delta,
\end{equation}
where $C_{US}:=32\sqrt{2}b(\kappa M +\gamma)\}$.
If there is not any $k$ satisfying the above inequality, define $\hat{k}_{uni}=K^*$.
Write $\hat{\lambda}_{uni}=\lambda_{\hat{k}_{uni}}$.
\end{definition}

Different from  LP developed in \cite{de2010adaptive,lu2020balancing,blanchard2019lepskii}, ASUS does not requires recurrently pairwise comparisons and behaves as an early-stopping rule. Furthermore,
 ASUS embodies the role of subdivision by adding  $\lambda_k$ in the right-hand side of the stopping rule. From  Definition \ref{Definition:adaptive-uni}, it follows 
\begin{equation}\label{stopping-2-aaaaaa}
  \|(L_{K,D}+\lambda_k I)^{1/2}(f_{\lambda_{k},D}-f_{\lambda_{k+1},D})\|_{K}<
C_{US}\lambda_k \mathcal W_{D, \lambda_{k}}\log^2\frac{8}\delta,\qquad k\geq\hat{k}_{uni}.
\end{equation}
As discussed in \cite{blanchard2019lepskii}, all the mentioned terms in \eqref{stopping-1} is computable. In fact, the constant $C_{US}$ depends on the noise that can be estimated by using the standard statistical methods in \cite{raskutti2014early,celisse2021analyzing}, $\mathcal W_{D,\lambda_k}$ depends only on the empirical effective dimension $\mathcal N_D(\lambda)$, and 
$$
   \|(L_{K,D}+\lambda_k I)^{1/2}(f_{\lambda_{k},D}-f_{\lambda_{k-1},D})\|_{K}
   =\left(\|f_{D,\lambda_k}-f_{D,\lambda_{k-1}}\|_D^2+\lambda_{k}\|f_{D,\lambda_k}-f_{D,\lambda_{k-1}}\|_K^2\right)^{1/2},
$$
where $\|f\|_D^2=\frac1{|D|}\sum_{i=1}^{|D|}|f(x_i)|^2$ and for $f_{D,\lambda_k}=\sum_{i=1}^{|D|}\alpha^k_iK_{x_i}$ 
$$
   \|f_{D,\lambda_k}-f_{D,\lambda_{k-1}}\|_K^2
   =\sum_{i,j=1}^{|D|}(\alpha_i^k-\alpha_i^{k-1})(\alpha_j^k-\alpha_j^{k-1})K(x_i,x_j).
$$
 
The following theorem presents the optimality of ASUS for KRR. 

\begin{theorem}\label{Theorem:ASUS}
    Let $\delta\in(0,1)$. If Assumptions 1-3 hold with $0<s\leq 1$ and $\frac12\leq r\leq 1$,   then with confidence at least $1-\delta$, there holds
\begin{equation}\label{ASUS-rho}
       \|f_{D,\hat{\lambda}_{uni}}-f_\rho\|_\rho\leq C_1|D|^{-\frac{r}{2r+s}}    \log^4\frac8\delta,
\end{equation}
and
\begin{equation}\label{ASUS-K}
    \|f_{D,\hat{\lambda}_{uni}}-f_\rho\|_K\leq C_1|D|^{-\frac{r-1/2}{2r+s}}   \log^4\frac8\delta,
\end{equation}
where $C_1$  is a constant  independent of $|D|,\lambda,\delta$.
\end{theorem}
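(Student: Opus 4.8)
The plan is to run a Lepskii--type oracle argument in which the uniform subdivision $\lambda_k=1/(bk)$ and the extra factor $\lambda_k$ isolated by Proposition~\ref{Prop:crucial-stopping-KRR} confine the stochastic fluctuation to a single scale, so that no union bound over the grid --- and hence no extraneous $\log\log|D|$ factor as in Lemma~\ref{Lemma:Optimal-KRR-LP1} --- is incurred. Write $\lambda^*=c_0|D|^{-1/(2r+s)}$ for the oracle parameter of Lemma~\ref{Lemma:Optimal-KRR} and let $k^*$ be the index with $\lambda_{k^*}$ closest to $\lambda^*$, so $\lambda_{k^*}\approx\lambda^*$. Since $r\ge 1/2$ forces $2r+s>1$, we have $\lambda^*|D|\to\infty$, whence $C_1^*\mathcal U_{D,\lambda^*,\delta}\le 1/4$ and $1\le k^*\le K^*$ for $|D|$ large; in particular $\lambda_{k^*}\in\Lambda_b^{uni}$ and Lemma~\ref{Lemma:Optimal-KRR} applies at $\lambda_{k^*}$.

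First I would fix one high--probability event. Rather than invoking Proposition~\ref{Prop:crucial-stopping-KRR} separately at each candidate, I would establish the governing operator and noise concentration inequalities (the resolvent bound for $(L_K+\lambda)^{-1/2}(L_{K,D}-L_K)$ and the Bernstein bound for the data term under Assumption~\ref{Assumption:boundedness}) at the \emph{smallest} admissible parameter $\lambda_{K^*}$, and transport them to every larger $\lambda\in\Lambda_b^{uni}$ by monotonicity of $\lambda\mapsto\mathcal N(\lambda)$ and of the relevant resolvent norms. This produces a single event of probability at least $1-\delta$ on which, simultaneously, Proposition~\ref{Prop:crucial-stopping-KRR} holds for all neighbouring pairs, the empirical and population norms are comparable ($\|g\|_\rho\lesssim\|(L_{K,D}+\lambda)^{1/2}g\|_K$ and $\sqrt{\lambda}\,\|g\|_K\le\|(L_{K,D}+\lambda)^{1/2}g\|_K$), and Lemma~\ref{Lemma:Optimal-KRR} holds at $\lambda_{k^*}$. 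Every logarithmic factor created here is a power of $\log(8/\delta)$ alone; this simultaneity is what the uniform grid buys and what removes the double logarithm.

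Next I would localise the stopping index by showing $\hat k_{uni}\le k^*$. For $k\ge k^*$ one has $\lambda_k\le\lambda^*$, hence $\lambda_k^{r}\lesssim\mathcal W_{D,\lambda_k}$; feeding this and the upper bound of Proposition~\ref{Prop:crucial-stopping-KRR} into the test of Definition~\ref{Definition:adaptive-uni}, and using that $C_{US}$ is exactly twice the variance constant $16\sqrt{2}b(\kappa M+\gamma)$ in Proposition~\ref{Prop:crucial-stopping-KRR}, shows the test cannot fire for any such $k$. It remains to exclude the degenerate alternative in which the rule never fires and $\hat k_{uni}=K^*$: in that case every successive difference would fall below its threshold, so the telescoping estimate below would force $\|f_{D,\lambda_{k^*}}-f_{D,\lambda_1}\|_\rho\lesssim|D|^{-r/(2r+s)}\log^2\frac8\delta\to 0$, contradicting $f_{D,\lambda_{k^*}}\to f_\rho$ together with $f_{D,\lambda_1}\not\to f_\rho$ in $L^2_{\rho_X}$ (for $f_\rho\not\equiv 0$). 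Hence, on the good event and for $|D|$ large, the rule fires at some $k<k^*$ and $\hat\lambda_{uni}=\lambda_{\hat k_{uni}}\ge\lambda_{k^*}\approx\lambda^*$.

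The core estimate is then a single telescoping over $[\hat k_{uni},k^*]$. Since \eqref{stopping-2-aaaaaa} holds for every $k\ge\hat k_{uni}$, writing $f_{D,\hat\lambda_{uni}}-f_{D,\lambda_{k^*}}=\sum_{k=\hat k_{uni}}^{k^*-1}(f_{D,\lambda_k}-f_{D,\lambda_{k+1}})$ and measuring in the fixed smallest--scale norm $(L_{K,D}+\lambda_{k^*})^{1/2}$ --- legitimate because $\lambda_{k^*}\le\lambda_k$ gives $L_{K,D}+\lambda_{k^*}I\preceq L_{K,D}+\lambda_k I$ --- yields
\[
\|(L_{K,D}+\lambda_{k^*})^{1/2}(f_{D,\hat\lambda_{uni}}-f_{D,\lambda_{k^*}})\|_K\le C_{US}\log^2\tfrac8\delta\sum_{k=\hat k_{uni}}^{k^*-1}\lambda_k\mathcal W_{D,\lambda_k}.
\]
Here the factor $\lambda_k$ supplied by the fine subdivision is decisive: under Assumption~\ref{Assumption:effective dimension}, $\lambda_k\mathcal W_{D,\lambda_k}$ behaves like $(bk)^{-(1-s/2)}|D|^{-1/2}$ with $1-s/2\in(0,1)$, so the partial sums are dominated by the endpoint $k^*$ and the whole sum is $\lesssim b^{-1}\mathcal W_{D,\lambda^*}\approx b^{-1}|D|^{-r/(2r+s)}$, the $b^{-1}$ cancelling the $b$ inside $C_{US}$; crucially this is insensitive to how small $\hat k_{uni}$ is, whereas on a coarse geometric grid the analogous sum accumulates across all scales, which is the origin of the spurious $\log\log|D|$ in Lemma~\ref{Lemma:Optimal-KRR-LP1}. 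Finally I would close by the triangle inequality: for \eqref{ASUS-rho}, bound $\|f_{D,\hat\lambda_{uni}}-f_{D,\lambda_{k^*}}\|_\rho$ by the displayed estimate through $\|g\|_\rho\lesssim\|(L_{K,D}+\lambda_{k^*})^{1/2}g\|_K$ and $\|f_{D,\lambda_{k^*}}-f_\rho\|_\rho$ by Lemma~\ref{Lemma:Optimal-KRR}; for \eqref{ASUS-K}, divide the same estimate by $\sqrt{\lambda_{k^*}}\approx|D|^{-1/(2(2r+s))}$, converting the rate into $|D|^{-(r-1/2)/(2r+s)}$, and add the $\|\cdot\|_K$ bound of Lemma~\ref{Lemma:Optimal-KRR}. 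Multiplying the $\log(8/\delta)$ powers from the concentration step, Proposition~\ref{Prop:crucial-stopping-KRR}, and the norm comparisons gives the stated $\log^4(8/\delta)$. The main obstacle is the simultaneous, union--bound--free concentration married to the scale--localised telescoping sum; by contrast a pointwise lower bound on the successive differences is \emph{not} needed, since the telescoped bound is automatically governed by the oracle scale for every $\hat k_{uni}\le k^*$.
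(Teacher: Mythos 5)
Your treatment of the regime $\hat\lambda_{uni}>\lambda^*$ (single good event, telescoping over $[\hat k_{uni},k^*]$, summability of $\lambda_k\mathcal W_{D,\lambda_k}\sim k^{s/2-1}|D|^{-1/2}$ with the endpoint $k^*$ dominating, conversion to the $K$-norm by dividing by $\sqrt{\lambda_{k^*}}$) matches the paper's Lemma~\ref{Lemma:uni-large} and is sound. The genuine gap is your localisation step: you claim the test \eqref{stopping-1} cannot fire for any $k\ge k^*$ because ``$\lambda_k\le\lambda^*$ hence $\lambda_k^{r}\lesssim\mathcal W_{D,\lambda_k}$''. Assumption~\ref{Assumption:effective dimension} is only an \emph{upper} bound on $\mathcal N(\lambda)$, so all you can guarantee from \eqref{Def.WD} is $\mathcal W_{D,\lambda}\gtrsim |D|^{-1/2}+(|D|\sqrt{\lambda})^{-1}$, and for every $s>0$ and $r\ge 1/2$ one has $(\lambda^*)^{r}=c_0^r|D|^{-r/(2r+s)}\gg |D|^{-1/2}$ while also $\lambda^*>|D|^{-2/(2r+1)}$ so that $(|D|\sqrt{\lambda^*})^{-1}<(\lambda^*)^r$. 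Thus for kernels whose effective dimension is much smaller than $\lambda^{-s}$ (e.g.\ finite rank) the inequality $\lambda_k^r\lesssim\mathcal W_{D,\lambda_k}$ fails near $\lambda^*$, the test may well fire at some $k>k^*$, and the event $\{\hat\lambda_{uni}<\lambda^*\}$ is left uncovered by your argument. Your exclusion of the never-fires alternative is also not a proof: it is an asymptotic contradiction that presupposes $f_{D,\lambda_1}\not\to f_\rho$ (false, e.g., when $f_\rho$ is well approximated at the coarsest scale) and in any case cannot produce the non-asymptotic bound the theorem asserts; note the definition sets $\hat k_{uni}=K^*$ in that situation, so the case must be bounded, not excluded.

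The paper closes exactly this hole by a case split rather than by localisation. When $\hat\lambda_{uni}\le\lambda^*$ (Lemma~\ref{Lemma:large-app}), the \emph{firing} of \eqref{stopping-1} at $\hat k_{uni}$ is combined with the upper bound \eqref{stop-motivation-3} from Proposition~\ref{Prop:crucial-stopping-KRR} to deduce $\mathcal W_{D,\hat\lambda_{uni}}\le \bar C_3(\lambda_{\hat k_{uni}-1})^{r}$; substituting this into the bias--variance bound of Lemma~\ref{Lemma:Error-Decomp-KRR} gives an error of order $(\hat\lambda_{uni})^{r}\le(\lambda^*)^{r}$ in $\|\cdot\|_\rho$ and $(\hat\lambda_{uni})^{r-1/2}\le(\lambda^*)^{r-1/2}$ in $\|\cdot\|_K$ --- the optimal rates --- without any lower bound on the effective dimension. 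This is the standard Lepskii mechanism (``if the test fires, the firing itself certifies that the stochastic term is dominated by the approximation term''), and it is the ingredient your proposal is missing. Adding this second case, and replacing the contradiction argument by it, would complete your proof along essentially the paper's lines.
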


Theorem \ref{Theorem:ASUS} shows that, equipped with  ASUS, KRR achieves the optimal learning rates, demonstrating the feasibility and optimality of ASUS.  Without the recurrently pairwise comparisons, ASUS   performs theoretically better than LP in \cite{blanchard2019lepskii} and  \cite{lu2020balancing} via achieving better learning rates. Furthermore, the optimal learning rates presented in Theorem \ref{Theorem:ASUS} also shows that  ASUS theoretically behaves better than the discrepancy principle \cite{celisse2021analyzing} and at least comparable with hold-out \cite{caponnetto2010cross} under the $L^2_{\rho_X}$ metric. It should be highlighted that the reason why we can get such advantages of ASUS is due to the special spectral property of KRR in Proposition \ref{Prop:crucial-stopping-KRR}. It would be interesting and challenging to develop similar parameter selection strategy to equip general spectral regularization algorithms, just as \cite{caponnetto2010cross}, \cite{lu2020balancing}, \cite{blanchard2019lepskii}, \cite{celisse2021analyzing} did for hold-out, balancing principle, Lepskii principle and discrepancy principle, respectively.

\section{Proofs}

We adopt the widely used integral operator approach \cite{smale2004shannon,smale2005shannon,smale2007learning} to prove our main results. Write the sampling operator $S_{D}:\mathcal H_K\rightarrow\mathbb R^{|D|}$ as
$$
         S_{D}f:=\{f(x_i)\}_{(x_i,y_i)\in D}.
$$
Its scaled adjoint $S_{D}^T:\mathbb R^{|D|}\rightarrow \mathcal H_K$ is
$$
       S_{D}^T{\bf c}:=\frac1{|D|}\sum_{(x_i,y_i)\in D}c_iK_{x_i},\qquad {\bf c}\in\mathbb R^{|D|}.
$$
Then, we have $L_{K,D}=S_{D}^TS_{D }$ and $\frac1{|D|}\mathbb K=S_DS_D^T$. It was derived in 
 \cite{smale2005shannon} that KRR possesses the operator representation 
\begin{equation}\label{KRR:operator}
         f_{D,\lambda}=(L_{K,D}+\lambda I)^{-1}S^T_Dy_D,
\end{equation}
where $y_D:=(y_1,\dots,y_{|D|})^T$. The key idea of the integral operator approach is to use operator differences to quantify the generalization error. Define
\begin{eqnarray}
    \mathcal Q_{D,\lambda} &:=&\|(L_K+\lambda I)^{1/2}(L_{K,D}+\lambda I)^{-1/2}\|,\label{Def.QD}\\
    \mathcal P_{D,\lambda}&:=&
	\left\|(L_K+\lambda
	I)^{-1/2}(L_{K,D}f_\rho-S^T_{D}y_D)\right\|_K,\label{Def.PD}\\
 \mathcal S_{D,\lambda}&:=&\|(L_K+\lambda I)^{-1/2}(L_K-L_{K,D})(L_K+\lambda I)^{-1/2}\|.\label{Def.SD}
\end{eqnarray}
The following lemma presenting tight bounds of the above quantities plays a crucial role in our proofs.

\begin{lemma}\label{Lemma:Q}
Let $D$ be a sample drawn independently according to $\rho$ and $0<\delta <1$.  Under Assumption \ref{Assumption:boundedness},  if  $C_1^*\mathcal U_{D,\lambda,\delta}\leq 1/4$, then with confidence at least
	$1-\delta$, there simultaneously holds
\begin{eqnarray}
    \mathcal S_{D,\lambda} &\leq&  C_1^*\left(
		 \frac{\log \max\{1,\mathcal N(\lambda)\}}{\lambda|D|}+ \sqrt{\frac{  \log\max\{1,\mathcal N(\lambda)\}}{\lambda|D|}}\right)\log\frac{8}{\delta},\label{bound-S}\\
   \mathcal Q_{D,\lambda}&\leq& \sqrt{2},\label{bound-q}\\
    \mathcal P_{D,\lambda}   &\leq&   16(\kappa M +\gamma) \left(\frac{1}{|D|\sqrt{ \lambda}}+\left( 1+ \frac{1}{\sqrt{\lambda|D|}}\right) \sqrt{\frac{\max\{\mathcal{N}_D(\lambda),1\}}{|D|}}\right) \log^2\frac8\delta,     \label{Bound-P}    \\
    && (1+4\eta_{\delta/4})^{-1}
		\sqrt{\max\{\mathcal N(\lambda),1\}}
    \leq 
		\sqrt{\max\{\mathcal N_D(\lambda),1\}} \nonumber\\
   &\leq & (1+4\sqrt{\eta_{\delta/4}}\vee\eta_{\delta/4}^2)\sqrt{\max\{\mathcal N(\lambda),1\}}, \label{bound-effective-empri}
\end{eqnarray}
 where $\eta_\delta:=2\log(4/\delta)/\sqrt{\lambda|D|}$.
\end{lemma}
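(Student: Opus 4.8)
The plan is to read off all four estimates from two Hilbert-space concentration inequalities — one for a sum of random operators and one for a sum of random vectors — together with two deterministic perturbation arguments that convert the operator concentration into the bounds on $\mathcal{Q}_{D,\lambda}$ and on $\mathcal{N}_D(\lambda)$. Throughout I write $T_\lambda=L_K+\lambda I$ and $T_{D,\lambda}=L_{K,D}+\lambda I$, and I record the two centered random families
$$\xi_i=T_\lambda^{-1/2}(L_K-K_{x_i}\otimes K_{x_i})T_\lambda^{-1/2},\qquad \zeta_i=T_\lambda^{-1/2}(f_\rho(x_i)-y_i)K_{x_i},$$
both of which have zero mean because $\mathbb{E}[K_x\otimes K_x]=L_K$ and $\mathbb{E}[y\mid x]=f_\rho(x)$. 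Since $L_{K,D}=\frac1{|D|}\sum_i K_{x_i}\otimes K_{x_i}$ and $\frac1{|D|}\sum_i\zeta_i=T_\lambda^{-1/2}(L_{K,D}f_\rho-S_D^Ty_D)$, the quantities $\mathcal{S}_{D,\lambda}$ and $\mathcal{P}_{D,\lambda}$ are exactly the operator, resp. Hilbert-space, norms of the empirical averages $\frac1{|D|}\sum_i\xi_i$ and $\frac1{|D|}\sum_i\zeta_i$.

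For $\mathcal{S}_{D,\lambda}$ I would apply the intrinsic-dimension (effective-dimension) form of the Bernstein inequality for self-adjoint operators. The almost-sure norm satisfies $\|\xi_i\|\lesssim\kappa^2/\lambda$, the variance obeys $\|\mathbb{E}\xi_i^2\|\le\kappa^2/\lambda$, and the relevant intrinsic dimension is $\mathrm{Tr}(\mathbb{E}\xi_i^2)/\|\mathbb{E}\xi_i^2\|\lesssim\max\{1,\mathcal{N}(\lambda)\}$; feeding $B\lesssim\kappa^2/\lambda$, $V\le\kappa^2/\lambda$ and intrinsic dimension $\max\{1,\mathcal{N}(\lambda)\}$ into the inequality produces the additive form of (\ref{bound-S}), with the confidence dependence carried by $\log(8/\delta)$ and the capacity dependence by $\log\max\{1,\mathcal{N}(\lambda)\}$ once $\kappa^2$ is absorbed into $C_1^*$. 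Once (\ref{bound-S}) is in hand, the bound on $\mathcal{Q}_{D,\lambda}$ is purely deterministic: from $T_\lambda^{-1/2}T_{D,\lambda}T_\lambda^{-1/2}=I-T_\lambda^{-1/2}(L_K-L_{K,D})T_\lambda^{-1/2}\succeq(1-\mathcal{S}_{D,\lambda})I$ and the identity $\mathcal{Q}_{D,\lambda}^2=\|(T_\lambda^{-1/2}T_{D,\lambda}T_\lambda^{-1/2})^{-1}\|$ one gets $\mathcal{Q}_{D,\lambda}^2\le(1-\mathcal{S}_{D,\lambda})^{-1}$, so it suffices to verify that the standing hypothesis $C_1^*\mathcal{U}_{D,\lambda,\delta}\le1/4$ forces $\mathcal{S}_{D,\lambda}\le1/2$, which gives (\ref{bound-q}).

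The vector Bernstein inequality handles $\mathcal{P}_{D,\lambda}$. Assumption \ref{Assumption:boundedness} converts into the moment bounds $\mathbb{E}[|y-f_\rho(x)|^m\mid x]\le\frac12 m!\,\gamma^2M^{m-2}$, which combined with $\|T_\lambda^{-1/2}K_x\|\le\kappa/\sqrt\lambda$ and $\mathbb{E}\|T_\lambda^{-1/2}K_x\|^2=\mathcal{N}(\lambda)$ give Bernstein data $\tilde B\simeq\kappa M/\sqrt\lambda$ and $\tilde V\simeq\gamma^2\mathcal{N}(\lambda)$ for the $\zeta_i$; the inequality then yields $\frac{\kappa M}{\sqrt\lambda|D|}+\sqrt{\gamma^2\mathcal{N}(\lambda)/|D|}$ up to logarithmic factors, which is (\ref{Bound-P}) once $\mathcal{N}(\lambda)$ is replaced by $\mathcal{N}_D(\lambda)$. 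That replacement is exactly (\ref{bound-effective-empri}), which I would obtain by writing $\mathcal{N}_D(\lambda)=\mathrm{Tr}(T_{D,\lambda}^{-1}L_{K,D})$, substituting the resolvent perturbation $T_{D,\lambda}^{-1}=T_\lambda^{-1/2}(I-A)^{-1}T_\lambda^{-1/2}$ with $\|A\|\le\mathcal{S}_{D,\lambda}$, and controlling the resulting traces; the multiplicative windows $(1+4\eta_{\delta/4})^{-1}$ and $1+4\sqrt{\eta_{\delta/4}}\vee\eta_{\delta/4}^2$ come from the crude operator bound $\mathcal{S}_{D,\lambda}\lesssim\eta_{\delta/4}$ at confidence $\delta/4$. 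Finally I would union-bound the (at most four) concentration events at level $\delta/4$ each so that all displays hold simultaneously with confidence $1-\delta$.

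The step I expect to be the main obstacle is the self-referential role of the effective dimension. The concentration inequalities naturally produce bounds in the \emph{population} quantity $\mathcal{N}(\lambda)$, whereas the standing hypothesis and the targets (\ref{bound-S}), (\ref{Bound-P}) are phrased through the \emph{empirical} $\mathcal{N}_D(\lambda)$ inside $\mathcal{U}_{D,\lambda,\delta}$ and $\mathcal{W}_{D,\lambda}$. Breaking this circularity requires proving (\ref{bound-effective-empri}) and the estimate $\mathcal{S}_{D,\lambda}\le1/2$ on one and the same high-probability event, so that $\mathcal{N}$ and $\mathcal{N}_D$ can be exchanged without losing the sharp constants; the delicate point is that the dimension-free logarithm $\log\max\{1,\mathcal{N}(\lambda)\}$ — rather than an ambient-dimension factor, which would be infinite for a general RKHS — must survive throughout, which is precisely why the intrinsic-dimension version of the operator Bernstein inequality, and not its classical matrix form, is essential.
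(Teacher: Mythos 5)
Your proposal is correct and follows essentially the same route as the paper: the paper imports \eqref{bound-S} and \eqref{bound-effective-empri} from the literature (rather than re-deriving them via operator Bernstein and resolvent perturbation as you sketch), derives \eqref{bound-q} by exactly your deterministic argument ($\mathcal Q_{D,\lambda}^2\le(1-\mathcal S_{D,\lambda})^{-1}\le 2$ once the hypothesis forces $\mathcal S_{D,\lambda}\le 1/2$), and obtains \eqref{Bound-P} by combining the classical population bound on $\mathcal P_{D,\lambda}$ with \eqref{bound-effective-empri}. Your closing remark about handling the empirical-versus-population effective dimension on a single high-probability event is a genuine subtlety that the paper's proof passes over silently, so it is a point in your favor rather than a discrepancy.
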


\begin{proof}
The bound in \eqref{bound-S} and \eqref{bound-effective-empri} can be found in \cite{lin2020distributed} and \cite{blanchard2019lepskii}, respectively. To derive \eqref{bound-q}, direct computation yields
\begin{eqnarray*}
      &&(L_K+\lambda I)^{1/2}(L_{K,D}+\lambda I)^{-1}(L_K+\lambda I)^{1/2}\\
         &=& (L_K+\lambda I)^{1/2}[(L_{K,D}+\lambda I)^{-1}-(L_{K}+\lambda I)^{-1}]   (L_K+\lambda I)^{1/2}\\
         &+&
         I
          = I 
           +  (L_K+\lambda I)^{-1/2}(L_K-L_{K,D})(L_K+\lambda I)^{-1/2}\\
           &&(L_K+\lambda I)^{1/2}
          (L_{K,D}+\lambda I)^{-1}(L_K+\lambda I)^{1/2}.
\end{eqnarray*}
We then have from \eqref{Def.SD} that
\begin{eqnarray*}
   && \|(L_K+\lambda I)^{1/2}(L_{K,D}+\lambda I)^{-1}(L_K+\lambda I)^{1/2}\|\\
   &\leq& 1+ \mathcal S_{D,t}\|(L_K+\lambda I)^{1/2}(L_{K,D}+\lambda I)^{-1}(L_K+\lambda I)^{1/2}\|.
\end{eqnarray*}
The only thing remainder is to present a restriction on $\lambda$ so that $\mathcal S_{D,\lambda}<1$. For this purpose, recall  \eqref{bound-S} and we then get that with confidence $1-\delta$, there holds
\begin{eqnarray*}
     \mathcal S_{D,\lambda}
      &\leq& C_1^*\left(
		 \frac{\log \max\{1,\mathcal N(\lambda)\}}{\lambda|D|}+ \sqrt{\frac{  \log\max\{1,\mathcal N(\lambda)\}}{\lambda |D|}}\right)\\
   &\leq&
   2C_1^* \mathcal U_{D,\lambda,\delta}\leq 1/2.
\end{eqnarray*}
We then have $\mathcal Q_{D,\lambda}\leq \sqrt{2}$ and proves \eqref{bound-q}. For \eqref{Bound-P}, it is well known \cite{caponnetto2007optimal} that under Assumption \ref{Assumption:boundedness},  with confidence at least $1-\delta/4$, there holds
$$
   \mathcal P_{D,\lambda}   \leq  2(\kappa M +\gamma) \left(\frac{1}{|D|\sqrt{ \lambda}}
           +\sqrt{\frac{\mathcal{N}(\lambda)}{|D|}}\right) \log\frac8\delta.
$$
This together with \eqref{bound-effective-empri} shows
\begin{eqnarray*}
   \mathcal P_{D,\lambda}
   &\leq&
   2(\kappa M +\gamma)  \left(\frac{1}{|D|\sqrt{ \lambda}}
           +(1+4\eta_{\delta/4})\sqrt{\frac{\max\{\mathcal{N}_D(\lambda),1\}}{|D|}}\right) \log\frac2\delta\\
           &\leq&
       2(\kappa M +\gamma) \left(\frac{1}{|D|\sqrt{ \lambda}}+\left( 1+ \frac{8}{\sqrt{\lambda|D|}}\right) \sqrt{\frac{\max\{\mathcal{N}_D(\lambda),1\}}{|D|}}\right) \log^2\frac8\delta.
\end{eqnarray*}
This proves \eqref{Bound-P} and finishes the proof of Lemma \ref{Lemma:Q}.
\end{proof}

Based on the above lemma, we 
  prove Proposition \ref{Prop:crucial-stopping-KRR} as follows.

\begin{proof}[Proof of Proposition \ref{Prop:crucial-stopping-KRR}]
Since $(L_{K,D}+\lambda I)^{-1}$ and $(L_{K,D}+\lambda' I)^{-1}$ have  same eigenfunctions, we have
$$
     (L_{K,D}+\lambda I)^{-1}(L_{K,D}+\lambda' I)^{-1}=(L_{K,D}+\lambda' I)^{-1}(L_{K,D}+\lambda I)^{-1}.
$$
 Define further
\begin{eqnarray}\label{noise free version}
  f^{\diamond}_{D,\lambda} := (L_{K,D}+\lambda
  I)^{-1}L_{K,D}f_\rho
\end{eqnarray}
to be the  noise free version of $f_{D,\lambda}$. 
Then, it follows from $A^{-1}-B^{-1}=B^{-1}(B-A)A^{-1}$ for positive operators  that
\begin{eqnarray}\label{difference-KRR}
   &&f_{D,\lambda}-f_{D,\lambda'}=((L_{K,D}+\lambda I)^{-1}-(L_{K,D}+\lambda'I)^{-1})S_{D}^Ty_{D} \nonumber\\
   &=&
   (L_{K,D}+\lambda I)^{-1}(\lambda'-\lambda) (L_{K,D}+\lambda' I)^{-1}S_{D}^Ty_{D} \nonumber\\
   &=&
   (L_{K,D}+\lambda I)^{-1}(\lambda'-\lambda)(f_{D,\lambda'}-f^\diamond_{D,\lambda'})
   +(L_{K,D}+\lambda I)^{-1}(\lambda'-\lambda)(f^\diamond_{D,\lambda'}-f_\rho) \nonumber\\
   &+&
   (L_{K,D}+\lambda I)^{-1}(\lambda'-\lambda)f_\rho \nonumber\\
   &=&
   (\lambda'-\lambda)(L_{K,D}+\lambda I)^{-1}
   [(L_{K,D}+\lambda' I)^{-1}(S_{D}^Ty_{D}-L_{K,D}f_\rho)+\lambda'(L_{K,D}+\lambda' I)^{-1}f_\rho+f_\rho] \nonumber\\
   &=&
   (\lambda'-\lambda)(L_{K,D}+\lambda I)^{-1}(L_{K,D}+\lambda' I)^{-1}(S_{D}^Ty_{D}-L_{K,D}f_\rho) \nonumber\\
   &+&
   (\lambda'-\lambda)(I+\lambda'(L_{K,D}+\lambda' I)^{-1})(L_{K,D}+\lambda I)^{-1}f_\rho.
\end{eqnarray}
Therefore, we have
\begin{eqnarray*}
     &&\|(L_{K,D}+\lambda I)^{1/2}(f_{D,\lambda}-f_{D,\lambda'})\|_K\\
     &\leq&
     |\lambda'-\lambda|\|(L_{K,D}+\lambda I)^{-1/2}(L_{K,D}+\lambda' I)^{-1}(S_{D}^Ty_{D}-L_{K,D}f_\rho)\|_K\\
     &+&
     |\lambda'-\lambda|\|(L_{K,D}+\lambda I)^{-1/2}(I+\lambda'(L_{K,D}+\lambda' I)^{-1})f_\rho\|_K\\
     &\leq&
     \frac{|\lambda'-\lambda|}{\lambda'}\mathcal Q_{D,\lambda}\mathcal P_{D,\lambda}+2|\lambda'-\lambda| \|(L_{K,D}+\lambda I)^{-1/2}f_\rho\|_K.
\end{eqnarray*}
But Assumption \ref{Assumption:regularity} together with the Cordes inequality \cite{bhatia2013matrix}
\begin{equation}\label{Cordes-inequality}
    \|A^\tau B^\tau\|\leq \|AB\|^\tau,\qquad 0<\tau\leq 1.
\end{equation}
for positive operators $A,B$ implies
\begin{eqnarray*}
     &&\|(L_{K,D}+\lambda I)^{-1/2}f_\rho\|_K =
     \|(L_{K,D}+\lambda I)^{-1/2}L_K^{r-1/2}\|\|h_\rho\|_\rho\\
    & \leq&
    \|(L_{K,D}+\lambda I)^{r-1}\|\|(L_{K,D}+\lambda I)^{1/2-r}(L_K+\lambda I)^{r-1/2}\|\|h_\rho\|_\rho\leq
      \lambda^{r-1}\mathcal Q^{2r-1}_{D,\lambda}\|h_\rho\|_\rho.
\end{eqnarray*}
Thus, we obtain
$$
  \|(L_{K,D}+\lambda I)^{-1/2}(f_{D,\lambda}-f_{D,\lambda'})\|_K
  \leq
  \frac{|\lambda'-\lambda|}{\lambda'}\mathcal P_{D,\lambda}\mathcal Q_{D,\lambda}+2|\lambda'-\lambda|\lambda^{r-1}\mathcal Q^{2r-1}_{D,\lambda} \|h_\rho\|_\rho.
$$
Due to Lemma \ref{Lemma:Q}, with confidence $1-\delta$,  there holds
\begin{eqnarray*}
     &&\|(L_{K,D}+\lambda I)^{-1/2}(f_{D,\lambda}-f_{D,\lambda'})\|_K
     \leq  
      2^{r+1/2} \|h_\rho\|_\rho|\lambda'-\lambda|\lambda^{r-1}\\
      &+&
      \sqrt{2}\frac{|\lambda'-\lambda|}{\lambda'}16(\kappa M +\gamma) \left(\frac{1}{|D|\sqrt{ \lambda}}+\left( 1+ \frac{1}{\sqrt{\lambda|D|}}\right) \sqrt{\frac{\max\{\mathcal{N}_D(\lambda),1\}}{|D|}}\right) \log^2\frac8\delta.
\end{eqnarray*}
This completes the proof of Proposition \ref{Prop:crucial-stopping-KRR} by noting  \eqref{Def.WD}.
\end{proof}

To prove Theorem \ref{Theorem:ASUS}, we also need three  lemmas. 
This first one is standard in the learning theory literature, we present the proof for the sake of completeness. 
\begin{lemma}\label{Lemma:Error-Decomp-KRR}
Under Assumption \ref{Assumption:boundedness} and Assumption  \ref{Assumption:regularity} with $\frac12\leq r\leq 1$, for any $\lambda\geq  \lambda_{K^*}$, there holds
$$
  \max\{\lambda^{1/2}\|f_{D,\lambda}-f_\rho\|_K, \|f_{D,\lambda}-f_\rho\|_\rho\}\leq
   2^{r-1/2}\lambda^r \|h_\rho\|_\rho
   + 32(\kappa M +\gamma)  \mathcal W_{D,\lambda}\log^2\frac8\delta.
$$
\end{lemma}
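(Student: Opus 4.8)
The plan is to dominate both quantities on the left by the single functional $\mathcal E:=\|(L_K+\lambda I)^{1/2}(f_{D,\lambda}-f_\rho)\|_K$. Since $\lambda I\preceq L_K+\lambda I$ we get $\lambda^{1/2}\|f_{D,\lambda}-f_\rho\|_K\le\mathcal E$, and since $\|g\|_\rho=\|L_K^{1/2}g\|_K$ together with $L_K\preceq L_K+\lambda I$ gives $\|f_{D,\lambda}-f_\rho\|_\rho\le\mathcal E$; hence it suffices to bound $\mathcal E$. The whole argument is carried out on the confidence-$(1-\delta)$ event on which Lemma \ref{Lemma:Q} holds, and the hypothesis $\lambda\ge\lambda_{K^*}$ is invoked precisely to guarantee $C_1^*\mathcal U_{D,\lambda,\delta}\le1/4$ (monotonicity of $\mathcal U_{D,\cdot,\delta}$ together with the definition \eqref{Def.K} of $K^*$), which is the standing requirement of Lemma \ref{Lemma:Q}.

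Next I would split the error through the noise-free proxy \eqref{noise free version}, writing $f_{D,\lambda}-f_\rho=(f_{D,\lambda}-f^\diamond_{D,\lambda})+(f^\diamond_{D,\lambda}-f_\rho)$ with $f_{D,\lambda}-f^\diamond_{D,\lambda}=(L_{K,D}+\lambda I)^{-1}(S_D^Ty_D-L_{K,D}f_\rho)$ and, using $(L_{K,D}+\lambda I)^{-1}L_{K,D}-I=-\lambda(L_{K,D}+\lambda I)^{-1}$, $f^\diamond_{D,\lambda}-f_\rho=-\lambda(L_{K,D}+\lambda I)^{-1}f_\rho$. For the sample (noise) part I insert $(L_K+\lambda I)^{1/2}(L_K+\lambda I)^{-1/2}=I$ so as to expose two factors $\mathcal Q_{D,\lambda}$ and one factor $\mathcal P_{D,\lambda}$, giving $\|(L_K+\lambda I)^{1/2}(f_{D,\lambda}-f^\diamond_{D,\lambda})\|_K\le\mathcal Q_{D,\lambda}^2\,\mathcal P_{D,\lambda}$; Lemma \ref{Lemma:Q} then supplies $\mathcal Q_{D,\lambda}^2\le2$ and the bound \eqref{Bound-P} on $\mathcal P_{D,\lambda}$, and after recognizing \eqref{Def.WD} this part is at most $32(\kappa M+\gamma)\mathcal W_{D,\lambda}\log^2\frac8\delta$, the second summand of the claim.

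For the approximation (bias) part I would rewrite $\|(L_K+\lambda I)^{1/2}(f^\diamond_{D,\lambda}-f_\rho)\|_K=\lambda\|(L_K+\lambda I)^{1/2}(L_{K,D}+\lambda I)^{-1}f_\rho\|_K$, substitute Assumption \ref{Assumption:regularity} in the form $f_\rho=L_K^rh_\rho$, and apply the Cordes inequality \eqref{Cordes-inequality} exactly as in the proof of Proposition \ref{Prop:crucial-stopping-KRR}, where the estimate $\|(L_{K,D}+\lambda I)^{-1/2}f_\rho\|_K\le\lambda^{r-1}\mathcal Q_{D,\lambda}^{2r-1}\|h_\rho\|_\rho$ was established. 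Together with $\mathcal Q_{D,\lambda}\le\sqrt2$ this produces a bias term of order $\lambda^r\|h_\rho\|_\rho$ with the constant $2^{r-1/2}$. Summing the two parts and taking the maximum over the two left-hand norms then yields the assertion.

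The step I expect to be the obstacle is the bias estimate: because $L_K$ and $L_{K,D}$ do not commute, extracting the fractional power $r$ forces the Cordes inequality and a careful accounting of the powers of $\mathcal Q_{D,\lambda}$, since a careless transfer of the prefactor $(L_K+\lambda I)^{1/2}$ to the empirical operator costs an extra factor $\mathcal Q_{D,\lambda}$ and would enlarge the constant to $2^{r}$. The sample part, by contrast, is routine once Lemma \ref{Lemma:Q} is available.
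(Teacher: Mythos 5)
Your proposal matches the paper's proof essentially step for step: the same split through the noise-free proxy $f^\diamond_{D,\lambda}$, the bound $\mathcal Q_{D,\lambda}^2\mathcal P_{D,\lambda}$ for the sample term, and the source condition plus the Cordes inequality for the bias term, all on the event of Lemma \ref{Lemma:Q} guaranteed by $\lambda\geq\lambda_{K^*}$. The only point of friction is the one you flag yourself --- transferring $(L_K+\lambda I)^{1/2}$ to the empirical operator naturally yields $\mathcal Q_{D,\lambda}^{2r}$ and hence the constant $2^{r}$ rather than $2^{r-1/2}$ --- but the paper's own computation has the same looseness and the discrepancy is an immaterial constant factor that is absorbed downstream.
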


\begin{proof}
Let $f^{\diamond}_{D,\lambda}$ be given in \eqref{noise free version}. We have
$$
   \|f_{D,\lambda}-f_\rho\|_\rho\leq  \|f_{D,\lambda}-f^{\diamond}_{D,\lambda}\|_\rho+\|f^{\diamond}_{D,\lambda}-f_\rho\|_\rho 
$$
and
$$
  \lambda^{1/2} \|f_{D,\lambda}-f_\rho\|_K\leq  \lambda^{1/2}\|f_{D,\lambda}-f^{\diamond}_{D,\lambda}\|_K+\lambda^{1/2}\|f^{\diamond}_{D,\lambda}-f_\rho\|_K.
$$
But direct computations yield (e.g. \cite{caponnetto2007optimal,lin2020distributed})
\begin{eqnarray*}
      \max\{\lambda^{1/2}\|f^{\diamond}_{D,\lambda}-f_\rho\|_K, \|f^{\diamond}_{D,\lambda}-f_\rho\|_\rho\}
    \leq\lambda\|(L_K+\lambda I)^{1/2}(L_{K,D}+\lambda I)^{-1}f_\rho\|_K
    \leq \lambda^r\mathcal Q_{D,\lambda}^{2r-1}\|h_\rho\|_\rho
\end{eqnarray*}
and
\begin{eqnarray*}
   &&\max\{\lambda^{1/2}\|f_{D,\lambda}-f^{\diamond}_{D,\lambda}\|_K, \|f_{D,\lambda}-f^{\diamond}_{D,\lambda}\|_\rho\}\\
   &\leq& 
  \|(L_K+\lambda I)^{1/2}(L_{K,D}+\lambda I)^{-1}(L_K+\lambda I)^{1/2}\|\mathcal P_{D,\lambda} 
  =
  \mathcal Q_{D,\lambda}^2\mathcal P_{D,\lambda}. 
\end{eqnarray*}
 Therefore, we obtain 
$$
  \max\{\lambda^{1/2}\|f_{D,\lambda}-f_\rho\|_K,\|f_{D,\lambda}-f_\rho\|_\rho\}\leq
   \lambda^r\mathcal Q_{D,\lambda}^{2r-1}\|h_\rho\|_\rho
   +\mathcal Q_{D,\lambda}^2\mathcal P_{D,\lambda}.
$$
Hence, for any $\lambda\geq \lambda_{K^*}$, we get from Lemma \ref{Lemma:Q} that with confidence $1-\delta$, there holds
$$
 \max\{\lambda^{1/2}\|f_{D,\lambda}-f_\rho\|_K,\|f_{D,\lambda}-f_\rho\|_\rho\}\leq
   2^{r-1/2}\lambda^r \|h_\rho\|_\rho
   + 32(\kappa M +\gamma)  \mathcal W_{D,\lambda}\log^2\frac8\delta.
$$
This completes the proof of Lemma \ref{Lemma:Error-Decomp-KRR}.
\end{proof}

The next lemma presents the feasibility of ASUS when the selected
 $\hat{\lambda}_{uni}$ is small than $\lambda^*$ given in Lemma \eqref{Lemma:Optimal-KRR}.
\begin{lemma}\label{Lemma:large-app}
Let $\delta\in(0,1)$ and $\lambda^*=c_0|D|^{-\frac1{2r+s}}$ be given in Lemma \ref{Lemma:Optimal-KRR}. Under Assumptions 1-3 with $0<s\leq 1$ and $\frac12\leq r\leq 1$, if  
$\hat{\lambda}_{uni}\leq \lambda^*$, then with confidence $1-\delta$, there holds
\begin{equation}\label{Learning-uni-large-rho}
     \|f_{D,\hat{\lambda}_{uni}}-f_\rho\|_\rho\leq \bar{C}_2|D|^{-\frac{r}{2r+s}}\log^2\frac8\delta,
\end{equation}
and
\begin{equation}\label{Learning-uni-large-K}
     \|f_{D,\hat{\lambda}_{uni}}-f_\rho\|_K\leq \bar{C}_2|D|^{-\frac{r-1/2}{2r+s}}\log^2\frac8\delta,
\end{equation} 
where $\bar{C}_2$ is a constant independent of $|D|,\delta$.
\end{lemma}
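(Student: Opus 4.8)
The plan is to bound both $\|f_{D,\hat\lambda_{uni}}-f_\rho\|_\rho$ and $\|f_{D,\hat\lambda_{uni}}-f_\rho\|_K$ by invoking the error decomposition of Lemma \ref{Lemma:Error-Decomp-KRR}, which applies because $\hat\lambda_{uni}\geq\lambda_{K^*}$. That lemma reduces everything to controlling the approximation term $\hat\lambda_{uni}^r\|h_\rho\|_\rho$ and the sampling term $\mathcal W_{D,\hat\lambda_{uni}}\log^2\frac8\delta$. Since the hypothesis $\hat\lambda_{uni}\leq\lambda^*=c_0|D|^{-1/(2r+s)}$ holds and $r\geq 1/2$, the approximation part is immediate: $\hat\lambda_{uni}^r\leq(\lambda^*)^r=c_0^r|D|^{-r/(2r+s)}$, and after dividing by $\hat\lambda_{uni}^{1/2}$ the $K$-norm contribution is $\hat\lambda_{uni}^{r-1/2}\leq(\lambda^*)^{r-1/2}=c_0^{r-1/2}|D|^{-(r-1/2)/(2r+s)}$. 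Both already match the target rates, so the whole difficulty is to show that the sampling term is dominated by the approximation term, i.e. $\mathcal W_{D,\hat\lambda_{uni}}\lesssim\hat\lambda_{uni}^r$.

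First I would restrict to the event of confidence $1-\delta$ on which Lemma \ref{Lemma:Q}, and hence Proposition \ref{Prop:crucial-stopping-KRR} and Lemma \ref{Lemma:Error-Decomp-KRR}, hold simultaneously for every $\lambda\in\Lambda_b^{uni}$. The key is to use that $\hat{k}_{uni}$ is a genuine trigger of \eqref{stopping-1}: at $k=\hat{k}_{uni}$ the quantity $\|(L_{K,D}+\lambda_{\hat{k}_{uni}-1}I)^{1/2}(f_{D,\hat\lambda_{uni}}-f_{D,\lambda_{\hat{k}_{uni}-1}})\|_K$ is at least the threshold, which is of order $\mathcal W_{D,\hat\lambda_{uni}}\log^2\frac8\delta$. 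I would then feed the same consecutive pair $(\lambda_{\hat{k}_{uni}-1},\hat\lambda_{uni})$ into Proposition \ref{Prop:crucial-stopping-KRR}. Because the grid is $\lambda_k=1/(bk)$, the two ratios $|\lambda_{\hat{k}_{uni}-1}-\hat\lambda_{uni}|/\lambda_{\hat{k}_{uni}-1}$ and $|\lambda_{\hat{k}_{uni}-1}-\hat\lambda_{uni}|/\hat\lambda_{uni}$ are both of order $b\hat\lambda_{uni}$, while $\lambda_{\hat{k}_{uni}-1}$ and $\mathcal W_{D,\lambda_{\hat{k}_{uni}-1}}$ are comparable to $\hat\lambda_{uni}$ and $\mathcal W_{D,\hat\lambda_{uni}}$; this gives the upper bound $\lesssim b\hat\lambda_{uni}(\mathcal W_{D,\hat\lambda_{uni}}+\hat\lambda_{uni}^r)\log^2\frac8\delta$.

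Chaining the trigger lower bound against this upper bound yields an inequality of the form $C_{US}\mathcal W_{D,\hat\lambda_{uni}}\lesssim b\hat\lambda_{uni}(\mathcal W_{D,\hat\lambda_{uni}}+\hat\lambda_{uni}^r)$, in which the factor $b$ inside $C_{US}=32\sqrt2\,b(\kappa M+\gamma)$ cancels against the $b$ produced by the subdivision. The decisive gain of the uniform partition is the surviving factor $\hat\lambda_{uni}$ on the right: once $\hat\lambda_{uni}$ lies below an absolute constant, the self-referential term $\hat\lambda_{uni}\mathcal W_{D,\hat\lambda_{uni}}$ can be absorbed into the left-hand side, leaving $\mathcal W_{D,\hat\lambda_{uni}}\lesssim\hat\lambda_{uni}^{r+1}\leq\hat\lambda_{uni}^r$. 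Substituting this into Lemma \ref{Lemma:Error-Decomp-KRR} then delivers \eqref{Learning-uni-large-rho} and \eqref{Learning-uni-large-K} with the stated $\log^2\frac8\delta$ factor.

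The step I expect to be the main obstacle is making this absorption rigorous while simultaneously handling the borderline default case in which \eqref{stopping-1} is never met and the rule returns $\hat\lambda_{uni}=\lambda_{K^*}$. In the triggering situation one must track constants carefully so that $b\hat\lambda_{uni}$ times the coefficient multiplying $\mathcal W_{D,\hat\lambda_{uni}}$ stays below $C_{US}/2$; this forces a quantitative smallness requirement on $\hat\lambda_{uni}$ that has to be checked against $\lambda_{K^*}\geq 1/|D|$, and the comparability of $\mathcal W$ and of the effective dimension at the two adjacent grid points must be made precise via \eqref{bound-effective-empri}. The non-triggering default is the truly delicate point, since there the variance at $\lambda_{K^*}$ cannot be extracted from a single trigger and must instead be transferred to the oracle scale $\lambda^*$ without letting the many intermediate consecutive differences accumulate; controlling that transfer is where the fine subdivision must again be exploited, and it is the part of the argument I would expect to require the most care.
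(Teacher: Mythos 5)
Your proposal follows essentially the same route as the paper's proof: lower-bound the consecutive difference at the triggering index by the stopping threshold, upper-bound the same quantity via Proposition \ref{Prop:crucial-stopping-KRR} (equivalently \eqref{stop-motivation-3}) on the uniform grid, deduce $\mathcal W_{D,\hat\lambda_{uni}}\leq\bar C_3(\lambda_{\hat k_{uni}-1})^r$, and then combine Lemma \ref{Lemma:Error-Decomp-KRR} with $\hat\lambda_{uni}\leq\lambda^*$ and $r\geq 1/2$ to get both norms. The one divergence is that you read the threshold literally from \eqref{stopping-1}, i.e.\ as $C_{US}\mathcal W_{D,\lambda_k}\log^2\frac8\delta$ without the factor $\lambda_k$, which forces your absorption argument and a smallness condition on $\hat\lambda_{uni}$; the paper instead uses the threshold in the form $C_{US}\lambda_{\hat k_{uni}-1}\mathcal W_{D,\hat\lambda_{uni}}\log^2\frac8\delta$ (consistent with \eqref{stopping-2-aaaaaa} and with the first display of its proof, though not with the literal statement of \eqref{stopping-1}), so the factor $b\lambda$ cancels on both sides and the comparison of $C_{US}$ with $\bar C_1 b$ yields $\mathcal W_{D,\hat\lambda_{uni}}\lesssim(\lambda_{\hat k_{uni}-1})^r$ with no self-referential term to absorb. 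Your version is workable (the needed smallness of $\hat\lambda_{uni}$ follows from $\hat\lambda_{uni}\leq\lambda^*\to 0$), and it actually exposes an internal inconsistency in the paper's definition of the rule. Finally, your worry about the non-triggering default $\hat k_{uni}=K^*$ is legitimate but should not count against you: the paper's proof likewise begins from the trigger inequality and does not treat that case separately.
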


\begin{proof}
The definition of $\hat{\lambda}_{uni}$ yields 
$$
   C_{US}  {\lambda}_{\hat{k}_{uni}-1} \mathcal W_{D, \hat{\lambda}_{uni}}\log^2\frac{8}\delta\leq
   \|(L_{K,D}+{\lambda}_{\hat{k}_{uni}-1} I)^{1/2}(f_{{\lambda}_{\hat{k}_{uni}},D}-f_{ {\lambda}_{\hat{k}_{uni}-1},D})\|_{K}.
$$
But \eqref{stop-motivation-3} implies that with confidence $1-\delta$, there holds
$$
\|(L_{K,D}+{\lambda}_{\hat{k}_{uni}-1} I)^{1/2}(f_{{\lambda}_{\hat{k}_{uni}},D}-f_{ {\lambda}_{\hat{k}_{uni}-1},D})\|_{K}
  \leq
   \bar{C}_1b{\lambda}_{\hat{k}_{uni}-1}\left(\mathcal W_{D, \hat{\lambda}_{uni}}
   + ({\lambda}_{\hat{k}_{uni}-1})^{r}\right)\log^2\frac{8}\delta.
$$
Recalling the definition of $C_{US}$ and $\bar{C}_1$, we have
$$
   \mathcal W_{D, \hat{\lambda}_{uni}} \leq
   \bar{C}_3({\lambda}_{\hat{k}_{uni}-1})^{r} 
$$
for some $\bar{C}_3$ independent of $D,\delta,\lambda_k$.
Furthermore, it follows from  Lemma \ref{Lemma:Error-Decomp-KRR} that 
\begin{eqnarray*}
    &&\max\{ (\hat{\lambda}_{uni})^{1/2}\|f_{D,\hat{\lambda}_{uni}}-f_\rho\|_K,\|f_{D,\hat{\lambda}_{uni}}-f_\rho\|_\rho \}\\
  &\leq& 2^{r-1/2}(\hat{\lambda}_{uni})^r \|h_\rho\|_\rho
   + 32(\kappa M +\gamma)  \mathcal W_{D,\hat{\lambda}_{uni}}\log^2\frac8\delta 
\end{eqnarray*}
holds with confidence $1-\delta$.
Therefore, with confidence $1-\delta$,
$$
  \max\{ (\hat{\lambda}_{uni})^{1/2}\|f_{D,\hat{\lambda}_{uni}}-f_\rho\|_K,\|f_{D,\hat{\lambda}_{uni}}-f_\rho\|_\rho \}
  \leq \bar{C}_4(\hat{\lambda}_{uni})^{r}   \log^2\frac8\delta,
$$
where $\bar{C}_4$ is a constant independent of $\lambda_k,|D|,\delta$.
Noting $\hat{\lambda}_{uni}\leq \lambda^*$ and $\frac12\leq r\leq 1$, we then have
$$
   \|f_{D,\hat{\lambda}_{uni}}-f_\rho\|_\rho
  \leq \bar{C}_4(\lambda^*)^{r}   \log^2\frac8\delta
  \leq \bar{C}_2|D|^{-\frac{r}{2r+s}} 
$$
and
$$
   \|f_{D,\hat{\lambda}_{uni}}-f_\rho\|_K
  \leq \bar{C}_4(\lambda^*)^{r-1/2}   \log^2\frac8\delta
  \leq \bar{C}_2|D|^{-\frac{r-1/2}{2r+s}},
$$
where $\bar{C}_2$ is a constant independent of $|D|,\delta$.
This  completes the proof of Lemma \ref{Lemma:large-app}.
\end{proof}

In the third lemma, we present an error estimate for $f_{D,\hat{\lambda}_{uni}}$ when $\hat{\lambda}_{uni}\geq \lambda^*$.

\begin{lemma}\label{Lemma:uni-large}
Let $\delta\in(0,1)$. Under Assumptions 1-3 with $0<s\leq 1$ and $\frac12\leq r\leq 1$, if  
$\hat{\lambda}_{uni}>\lambda^*$, then with confidence $1-\delta$, there holds
\begin{equation}\label{Learning-uni-small}
     \|f_{D,\hat{\lambda}_{uni}}-f_\rho\|_\rho\leq \bar{C}_5|D|^{-\frac{r}{2r+s}}\log^4\frac{8}\delta,
\end{equation} 
and
\begin{equation}\label{Learning-uni-small}
     \|f_{D,\hat{\lambda}_{uni}}-f_\rho\|_K\leq \bar{C}_5|D|^{-\frac{r-1/2}{2r+s}}\log^4\frac{18}\delta,
\end{equation} 
where $\bar{C}_5$ is a constant independent of $|D|,\delta$.
\end{lemma}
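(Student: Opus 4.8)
The plan is to run a Lepskii-type oracle comparison in the over-regularized regime $\hat{\lambda}_{uni}>\lambda^*$, exploiting the extra factor $\lambda_k$ that the uniform grid $\lambda_k=\frac1{bk}$ supplies. First I would fix the oracle candidate $\lambda^\dagger:=\lambda_{k^\dagger}$, the largest element of $\Lambda_b^{uni}$ with $\lambda^\dagger\le\lambda^*$. For $|D|$ large, $C_1^*\mathcal U_{D,\lambda^*,\delta}\le 1/4$ (since $\lambda^*|D|=c_0|D|^{(2r+s-1)/(2r+s)}\to\infty$ because $r\ge\frac12$), so $\lambda^*\ge\lambda_{K^*}$ and therefore $\lambda_{K^*}\le\lambda^\dagger\le\lambda^*<\hat{\lambda}_{uni}$, i.e. $\hat{k}_{uni}<k^\dagger\le K^*$; moreover $\lambda^\dagger\asymp\lambda^*$ because $\lambda_{k^\dagger-1}/\lambda_{k^\dagger}=k^\dagger/(k^\dagger-1)\to1$. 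All parameters used below lie in $[\lambda_{K^*},\hat{\lambda}_{uni}]$, so Lemma \ref{Lemma:Q} and Lemma \ref{Lemma:Error-Decomp-KRR} are available; I would apply them simultaneously over the whole candidate set, a single event of probability $1-\delta$ being enough because the underlying operator concentration is controlled uniformly for $\lambda\ge\lambda_{K^*}$. This uniformity is what removes the union bound over the $\sim|D|$ candidates, and hence the double-logarithmic factor of \cite{blanchard2019lepskii}.

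Next I would control the oracle error. Lemma \ref{Lemma:Error-Decomp-KRR} at $\lambda=\lambda^\dagger$ gives $\max\{(\lambda^\dagger)^{1/2}\|f_{D,\lambda^\dagger}-f_\rho\|_K,\|f_{D,\lambda^\dagger}-f_\rho\|_\rho\}\lesssim(\lambda^\dagger)^r+\mathcal W_{D,\lambda^\dagger}\log^2\frac8\delta$. With $\lambda^\dagger\asymp\lambda^*=c_0|D|^{-1/(2r+s)}$ and $\mathcal W_{D,\lambda}\lesssim\lambda^{-s/2}|D|^{-1/2}$ (from \eqref{Def.WD}, Assumption \ref{Assumption:effective dimension} and the empirical/population comparison \eqref{bound-effective-empri}), both terms are of order $|D|^{-r/(2r+s)}$, so the oracle contributes $\lesssim|D|^{-r/(2r+s)}\log^2\frac8\delta$ in $\|\cdot\|_\rho$ and $\lesssim|D|^{-(r-1/2)/(2r+s)}\log^2\frac8\delta$ in $\|\cdot\|_K$.

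The core step is the telescoped difference $f_{D,\hat{\lambda}_{uni}}-f_{D,\lambda^\dagger}=\sum_{k=\hat{k}_{uni}+1}^{k^\dagger}(f_{D,\lambda_{k-1}}-f_{D,\lambda_k})$. Since $\hat{k}_{uni}$ is the largest index triggering \eqref{stopping-1}, every $k$ in this range is a non-trigger, so \eqref{stopping-2-aaaaaa} together with $(L_{K,D}+\lambda_k I)\preceq(L_{K,D}+\lambda_{k-1}I)$ gives $\|(L_{K,D}+\lambda_k I)^{1/2}(f_{D,\lambda_{k-1}}-f_{D,\lambda_k})\|_K<C_{US}\lambda_k\mathcal W_{D,\lambda_k}\log^2\frac8\delta$. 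Converting to the target norms through $\|g\|_\rho\le\mathcal Q_{D,\lambda_k}\|(L_{K,D}+\lambda_k I)^{1/2}g\|_K\le\sqrt2\,\|(L_{K,D}+\lambda_k I)^{1/2}g\|_K$ and $\|g\|_K\le\lambda_k^{-1/2}\|(L_{K,D}+\lambda_k I)^{1/2}g\|_K$, I would reach $\|f_{D,\hat{\lambda}_{uni}}-f_{D,\lambda^\dagger}\|_\rho\lesssim\big(\sum_{k\le k^\dagger}\lambda_k\mathcal W_{D,\lambda_k}\big)\log^2\frac8\delta$ and the analogous $K$-norm bound with summand $\lambda_k^{1/2}\mathcal W_{D,\lambda_k}$.

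The main obstacle, and the place where the uniform subdivision is decisive, is evaluating these series. Substituting $\mathcal W_{D,\lambda_k}\lesssim\lambda_k^{-s/2}|D|^{-1/2}$ and $\lambda_k=\frac1{bk}$, the $\rho$-series becomes $|D|^{-1/2}\sum_{k\le k^\dagger}(bk)^{-(1-s/2)}$; since $0<s\le1$ the exponent lies in $[\frac12,1)$, and comparison with $\int k^{-(1-s/2)}\,dk$ yields $\sum_{k\le k^\dagger}(bk)^{-(1-s/2)}\lesssim(k^\dagger)^{s/2}\asymp(\lambda^*)^{-s/2}$, so the series is $\lesssim|D|^{-1/2}(\lambda^*)^{-s/2}=|D|^{-r/(2r+s)}$; the $K$-series is identical with exponent $(1+s)/2$, giving $|D|^{-(r-1/2)/(2r+s)}$. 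It is precisely the extra $\lambda_k$ furnished by Proposition \ref{Prop:crucial-stopping-KRR} through the fine grid that makes these sums summable to the optimal rate — with the geometric grid $\lambda_k=q^k$ the summand would carry only $\mathcal W_{D,\lambda_k}$ and the corresponding series would diverge, which is the origin of the logarithmic loss in earlier work. Finally I would combine with the oracle bound through $\|f_{D,\hat{\lambda}_{uni}}-f_\rho\|\le\|f_{D,\hat{\lambda}_{uni}}-f_{D,\lambda^\dagger}\|+\|f_{D,\lambda^\dagger}-f_\rho\|$; since every contributing factor $\log\frac8\delta>1$, the accumulated polylogarithmic terms are absorbed into $\log^4\frac8\delta$, yielding \eqref{Learning-uni-small}.
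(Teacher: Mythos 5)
Your proposal is correct and follows essentially the same route as the paper's proof: a triangle inequality against an oracle parameter of order $\lambda^*$, a telescoping sum over the uniform grid controlled by the non-triggered stopping rule \eqref{stopping-2-aaaaaa}, norm conversion via $\mathcal Q_{D,\lambda_k}\leq\sqrt2$ and $\lambda_k^{-1/2}$, and summation of $\sum_k k^{-(1-s/2)}\lesssim (k^*)^{s/2}$ to recover the optimal rate. The only cosmetic differences are that you bound the oracle term via Lemma \ref{Lemma:Error-Decomp-KRR} at the grid point $\lambda^\dagger$ while the paper cites Lemma \ref{Lemma:Optimal-KRR} at $\lambda^*$ directly, and your side remark that the geometric grid's series ``would diverge'' is not quite accurate (it is geometric and sums to the same order); neither affects the argument.
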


\begin{proof}
The triangle inequality follows
\begin{equation}\label{tri-rho}
   \|f_{D,\hat{\lambda}_{uni}}-f_\rho\|_\rho
   \leq\|f_{D,\hat{\lambda}_{uni}}-f_{D,\lambda^*}\|_\rho+\|f_{D,\lambda^*}-f_\rho\|_\rho 
\end{equation}
and
\begin{equation}\label{tri-K}
   \|f_{D,\hat{\lambda}_{uni}}-f_\rho\|_K
   \leq\|f_{D,\hat{\lambda}_{uni}}-f_{D,\lambda^*}\|_K+\|f_{D,\lambda^*}-f_\rho\|_K.
\end{equation}
But Lemma \ref{Lemma:Optimal-KRR} shows that
\begin{equation}\label{t0-rho}
       \|f_{D,\lambda^*}-f_\rho\|_\rho\leq \tilde{C}|D|^{-\frac{r}{2r+s}}\log\frac2\delta,\qquad \|f_{D,\lambda^*}-f_\rho\|_K\leq \tilde{C}|D|^{-\frac{r-1/2}{2r+s}}\log\frac2\delta
\end{equation}
holds with confidence $1-\delta$.
Therefore, it suffices to bound $\|f_{D,\hat{\lambda}_{uni}}-f_{D,\lambda^*}\|_\rho$ and $\|f_{D,\hat{\lambda}_{uni}}-f_{D,\lambda^*}\|_K$. Write  $\lambda^*=\lambda_{k^*}\sim \frac1{bk^*}$ for $k^*\in\Lambda_b^{uni}$.
Since $\hat{\lambda}_{uni}=\lambda_{\hat{k}_{uni}}=\frac1{b\hat{k}_{uni}}$ and $\hat{\lambda}_{uni}>\lambda^*$, we have $ \hat{k}_{uni}<k^*$.
It follows from
 the triangle inequality again that
$$
    \|f_{D,\hat{\lambda}_{uni}}-f_{D,\lambda^*}\|_*
    \leq
    \sum_{k=\hat{k}_{uni}}^{k^*-1}\|f_{D,\lambda_k}-f_{D,\lambda_{k+1}}\|_*,
$$
where $\|\cdot\|_*$ denotes either $\|\cdot\|_\rho$ or $\|\cdot\|_K$.
But \eqref{stopping-2-aaaaaa} shows that for any 
  $k=\hat{k}_{uni},\dots,k^*-1$,   there holds
\begin{eqnarray*}
   &&\max\{ \lambda_k^{1/2}\| f_{D,\lambda_k}-f_{D,\lambda_{k+1}}\|_K, \| f_{D,\lambda_k}-f_{D,\lambda_{k+1}}\|_\rho\}\\
   &\leq&
   \mathcal Q_{D,\lambda_k}
   \|(L_{K,D}+\lambda I)^{1/2}( f_{D,\lambda_k}-f_{D,\lambda_{k+1}})\|_K
   \leq C_{US} \mathcal Q_{D,\lambda_k} \lambda_k \mathcal W_{D, \lambda_{k}}\log^2\frac{8}\delta.
\end{eqnarray*}
But Lemma \ref{Lemma:Q} shows that
 with confidence $1-\delta$, there holds
$$
       \max_{k=\hat{k}\dots,k^*-1}\mathcal Q_{D,\lambda_k}
       \leq
   \sqrt{2}.
$$
Hence, for any 
  $k=\hat{k}_{uni},\dots,k^*-1$, with confidence $1-\delta$,  there holds
\begin{eqnarray*}
   \max\{ \lambda_k^{1/2}\| f_{D,\lambda_k}-f_{D,\lambda_{k+1}}\|_K, \| f_{D,\lambda_k}-f_{D,\lambda_{k+1}}\|_\rho\}
   \leq \sqrt{2}C_{US}\lambda_k \mathcal W_{D, \lambda_{k}}\log^2\frac{16}\delta.
\end{eqnarray*}
Due to \eqref{Assumption:effective dimension} and Lemma \ref{Lemma:Q},   
 with confidence $1-\delta$, there holds
\begin{eqnarray}\label{WD-bound-population}
	\mathcal W_{D,\lambda}\leq c_1\left(\frac{1}{\lambda|D|}+\frac{(1+4(1+1/(\lambda|D|)))C_0\lambda^{-s/2}(1+8\sqrt{1/\lambda|D|})}{\sqrt{|D|}}\right)\log^2\frac8\delta,
\end{eqnarray}
where $c_1$ is a constant independent of $D,\lambda_k,\delta$. Under this circumstance, there holds
\begin{eqnarray*}
	&&\mathcal W_{D,\lambda_k}\leq c_2 \sqrt{k^s/|D|}(1+\sqrt{k^{1+s}/|D|})\log^2\frac8\delta,
\end{eqnarray*}
where $c_2:=c_1(1+{\tilde{c}}) (\sqrt{\tilde{c}+1}+(5+4{\tilde{c}})C_0(1+8{\tilde{c}}))^2$.
Then { for any $k=\hat{k}\dots,k^*-1$}, $r\geq 1/2$ yields
\begin{eqnarray}\label{bound-for-mediean1}
&&  { \|f_{D,\hat{\lambda}_{uni}}-f_{D,\lambda^*}\|_\rho}\nonumber\\
     &\leq&
    4c_1c_2b \log^4\frac{8}\delta  \sum_{k=\hat{k}}^{k^*-1}
      k^{-1} \sqrt{k^s/|D|}(1+\sqrt{k^{1+s}/|D|}) \nonumber \\
   &\leq&
    4c_1c_2b(2s+1)\frac{(k^*)^{s/2}}{\sqrt{|D|}}
    \left(1+\frac{(k^*)^{(1+s)/2}}{\sqrt{|D|}}\right)\log^4\frac{8}\delta\nonumber\\
    &\leq&
     \bar{c}_3|D|^{-r/(2r+s)}\log^4\frac{8}\delta,
\end{eqnarray}
where
${c}_3:=4c_1c_2 { \tilde{c}}^{s/2}b(2s+1)(1+{ \tilde{c}}^{(1+s)/2}).$ Plugging (\ref{bound-for-mediean1}) and (\ref{t0-rho}) into (\ref{tri-rho}), we get with confidence $1-\delta$, there holds
$$
          \|f_{\hat{t},D}-f_\rho\|_\rho\leq  {c}_4 |D|^{-r/(2r+s)}\log^4\frac{8}\delta
$$
with $c_4=\max\{{c}_3,\tilde{C}\}.$ The bound of $\|f_{D,\hat{\lambda}_{uni}}-f_{D,\lambda^*}\|_K$ can be derived by using the same method as above.
 This completes the proof of Lemma \ref{Lemma:uni-large}.
\end{proof}



Based on Lemma \ref{Learning-uni-small} and Lemma \ref{Lemma:uni-large}, we can derive Theorem \ref{Theorem:ASUS} directly.

\begin{proof}[Proof of Theorem \ref{Theorem:ASUS}]
Theorem \ref{Theorem:ASUS} is a direct consequence of Lemma \ref{Learning-uni-small} and Lemma \ref{Lemma:uni-large}. This completes the proof of Theorem \ref{Theorem:ASUS}.
\end{proof}

\bibliographystyle{elsarticle-num}
\bibliography{dis}
\end{document}